\documentclass{article}

\usepackage{microtype}
\usepackage{graphicx}
\usepackage{subcaption}
\usepackage{booktabs} % for professional tables
\usepackage{multirow}
\usepackage{xcolor} 
\usepackage{placeins}
\usepackage{float}
\usepackage{colortbl}

\usepackage{hyperref}
\usepackage{listings}

\usepackage{listings}
\usepackage{xcolor}
\usepackage{inconsolata} % 【核心】更美观的等宽字体
\definecolor{graybg}{gray}{0.95}
\definecolor{ebgreen}{rgb}{0.0, 0.6, 0.0}   % 关键字绿色 (def, return)
\definecolor{ebblue}{rgb}{0.0, 0.0, 1.0}    % 函数名蓝色
\definecolor{ebgray}{rgb}{0.5, 0.5, 0.5}    % 注释灰色

\lstdefinestyle{ebstyle}{
    language=Python,
    basicstyle=\ttfamily\small,        % 字体大小
    keywordstyle=\color{ebgreen}\bfseries, % 关键字样式
    deletekeywords={max, sum, log, exp, min, print},
    commentstyle=\color{ebgray},       % 注释样式
    stringstyle=\color{ebgray},
    numbers=left,                      % 行号在左
    numberstyle=\tiny\color{ebgray},   % 行号样式
    stepnumber=1,
    numbersep=5pt,
    frame=none,                        % 【重点】去掉边框
    showspaces=false,
    showstringspaces=false,
    tabsize=4,
    xleftmargin=10pt,                  % 左边距，防止行号贴边
    emph={step, swd_step},             % 强调函数名
    emphstyle=\color{ebblue}\bfseries  % 函数名变为蓝色
}

\usepackage[preprint]{icml2026}

\usepackage{amsmath}
\usepackage{amssymb}
\usepackage{mathtools}
\usepackage{amsthm}

\usepackage[capitalize,noabbrev]{cleveref}

\theoremstyle{plain}
\newtheorem{theorem}{Theorem}[section]

\newtheorem{lemma}[theorem]{Lemma}

\theoremstyle{definition}

\theoremstyle{remark}

\usepackage[textsize=tiny]{todonotes}

\begin{document}

\twocolumn[
  \icmltitle{Stability-Weighted Decoding for Diffusion Language Models}

  % 移除 icmlsetsymbol{equal}{*} 因为不再需要标注共同一作

  \begin{icmlauthorlist}
    % 第一作者
    \icmlauthor{Yue Wu}{yyy}
    % 第二作者（通讯作者）
    \icmlauthor{Jian Huang}{yyy} 
  \end{icmlauthorlist}

  % 单位定义
  \icmlaffiliation{yyy}{Department of Data Science and Artificial Intelligence, The Hong Kong Polytechnic University, Hong Kong, China}

  % 通讯作者邮箱设置（Jian Huang 作为主要的 Correspondence）
  \icmlcorrespondingauthor{Jian Huang}{j.huang@polyu.edu.hk}
%   \icmlcorrespondingauthor{Yue Wu}{yue0301.wu@connect.polyu.hk}

  \icmlkeywords{Diffusion Large Language Models, Parallel Decoding, Stability-Weighted Decoding, dLLM}

  \vskip 0.3in
]

% this must go after the closing bracket ] following \twocolumn[ ...

% This command actually creates the footnote in the first column listing the
% affiliations and the copyright notice. The command takes one argument, which
% is text to display at the start of the footnote. The \icmlEqualContribution
% command is standard text for equal contribution. Remove it (just {}) if you
% do not need this facility.

% Use ONE of the following lines. DO NOT remove the command.
% If you have no special notice, KEEP empty braces:
\printAffiliationsAndNotice{}  % no special notice (required even if empty)
% Or, if applicable, use the standard equal contribution text:
% \printAffiliationsAndNotice{\icmlEqualContribution}

\begin{abstract}
Diffusion large language models (dLLMs) enable parallel text generation by iteratively denoising a fully masked sequence, unmasking a subset of masked tokens at each step.
Existing decoding strategies rely on static confidence metrics computed at a single denoising step, ignoring temporal history and often leading to premature unmasking of unstable tokens.
In this work, we theoretically establish that a token's temporal instability, quantified by the KL divergence between consecutive prediction distributions, provides a strict lower bound on its mutual information with the remaining masked context, indicating that temporally unstable tokens are inherently unsafe to unmask.
Based on this insight, we propose Stability-Weighted Decoding (SWD), a training-free, plug-and-play strategy that incorporates temporal stability into token scoring and acts as a universal modulator for arbitrary score-based decoding policies.
Experiments on code generation and mathematical reasoning benchmarks demonstrate that SWD consistently improves generation accuracy across representative scoring metrics and selection policies, and exhibits exceptional robustness, maintaining a significant performance lead over standard baselines across varying acceleration ratios.

\end{abstract}

\section{Introduction}
\label{sec:intro}

\begin{figure*}[t]
    \centering
    % --- (a) 结果展示 (Case Study) ---
    % 宽度设为 0.32\textwidth，留出一点空隙
    \begin{subfigure}[b]{0.32\textwidth} % 保持宽度一致
        \centering
        % 【修改点】：使用 \raisebox{高度}{图片}
        % 0.15\height 表示向上移动图片自身高度的 15%
        % 您也可以直接写具体数值，如 0.5cm, 10pt 等
        \raisebox{0.15\height}{
            \includegraphics[width=\textwidth]{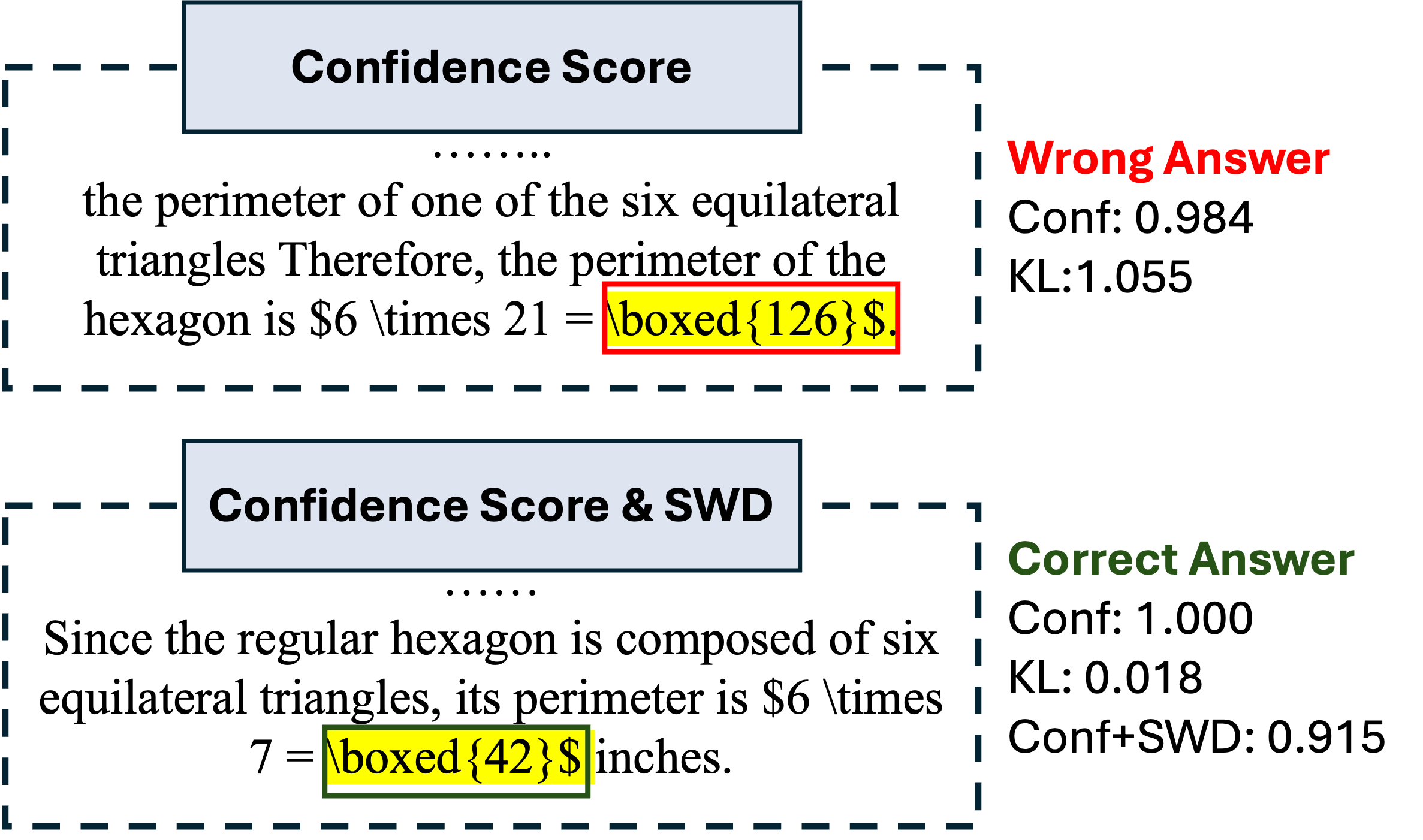}
        }
        
        % 如果图片上移后，图片和 Caption 中间空隙太大，
        % 可以取消下面这行的注释来把 Caption 拉回来一点
        % \vspace{-5pt} 
        
        \caption{Case Study: Hexagon Problem}
        \label{fig:case_study}
    \end{subfigure}
    % \hfill % 在子图之间自动填充空白
    % --- (b) Baseline 机制分析 ---
    \begin{subfigure}[b]{0.32\textwidth}
        \centering
        \includegraphics[width=\textwidth]{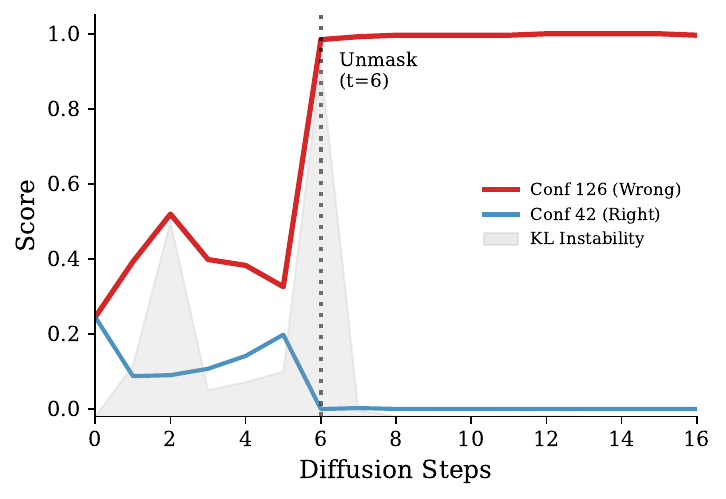}
        \caption{Baseline: Premature Commitment}
        \label{fig:anatomy_base}
    \end{subfigure}
    \hfill % 在子图之间自动填充空白
    % --- (c) SWD 机制分析 ---
    \begin{subfigure}[b]{0.32\textwidth}
        \centering
        \includegraphics[width=\textwidth]{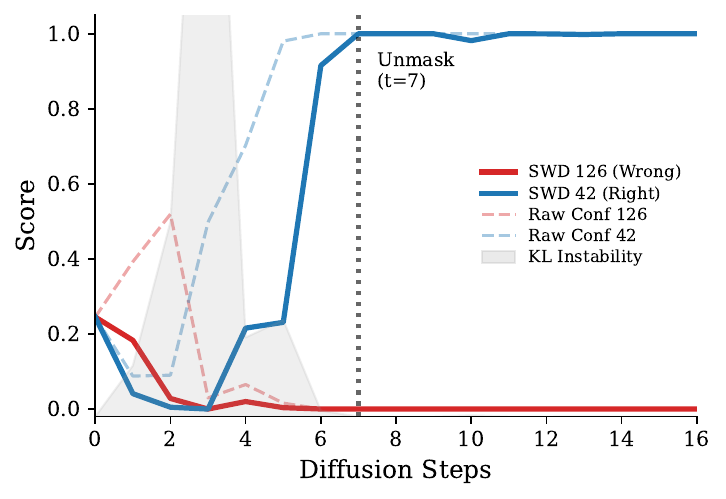}
        \caption{SWD: Stability-Aware Selection}
        \label{fig:anatomy_swd}
    \end{subfigure}
    
    % 调整 Caption 与图片的距离
    \vspace{-5pt}
    
    \caption{We visualize the decoding trajectory for a hexagon geometry problem in Math500. 
    \textbf{(a)} The Baseline (Confidence-only) generates an incorrect answer "126", while SWD correctly outputs "42". 
    \textbf{(b)} Although the incorrect token '126' exhibits high instability (large KL divergence, grey area), the model prematurely unmasks it at $t=6$ due to transient high confidence. 
    \textbf{(c)} SWD applies a stability penalty. The high instability of '126' suppresses its score (Red solid line), preventing the error. The correct token '42' is selected only after it stabilizes later in the diffusion process.}
    \label{fig:qualitative_analysis}
\end{figure*}

% Discrete diffusion show potential in various generative task, especially in text generation. Among them, masked diffusion model\cite{simple_effective} show great performance, and scale it up to large language model\cite{llada, dream}, challenging the dominance of autoregressive model. Different from autoregressive model which generate token one by one, diffusion language model(dLLM) use bidirection attention, non-autoregressive, iterative denoising process, could be viewed as any-order autoregressive\cite{ou2024absorbingdiscretediffusionsecretlyradd, trainkim}. This mechanism allows bidirectional context utilization enables parallel decoding — the ability to predict and unmask multiple tokens simultaneously\cite{ebsampler, fastdllm}. While promising, the efficiency and quality of parallel decoding critically depend on the selection policy: deciding which subset of tokens is sufficiently reliable to be unmasked at each step.
Discrete diffusion models \cite{d3pm} have emerged as a compelling paradigm for generative tasks, with Masked Diffusion Models (MDMs) \cite{simple_effective} demonstrating exceptional performance in text generation. 
By scaling this specific architecture, recent works like LLaDA \cite{llada} and Dream \cite{dream} have established the class of Diffusion Large Language Models (dLLMs)\cite{llada2.0, llada1.5,diffucoder}, challenging the dominance of autoregressive (AR) methods. 
Unlike AR models constrained to left-to-right processing, dLLMs leverage bidirectional attention and an iterative denoising process, and conceptually characterized as an any-order autoregressive process \cite{ou2024absorbingdiscretediffusionsecretlyradd, trainkim}, possess higher potential than their fixed-order counterparts. 
This unique mechanism enables parallel decoding capability to predict and unmask multiple tokens simultaneously \cite{ebsampler, fastdllm}. 
However, realizing this potential requires a critical decision at each step: accurately identifying which subset of tokens is sufficiently reliable to unmask.

Current decoding strategies predominantly rely on static confidence metrics (e.g., confidence\cite{llada}, margin\cite{trainkim}, entropy\cite{koh2025conditionalmaskdiscretediffusionentropy}) derived from a single timestep. 
We argue that these "snapshot-based" methods exhibit a fundamental blindness: they treat the dynamic denoising process as independent events, neglecting the token's historical trajectory. As illustrated in Figure \ref{fig:qualitative_analysis}, tokens often exhibit temporal instability — transient high confidence with significant fluctuations. 
Relying solely on static metrics may lead to premature commitment, where unstable tokens are unmasked too early, locking in errors that propagate globally.

In this work, we formalize the role of temporal dynamics through a rigorous information-theoretic lens. We theoretically establish that a token's temporal instability—quantified by the Kullback-Leibler (KL) divergence between consecutive steps—constitutes a strict lower bound on its mutual information with the remaining masked context. This implies that if a token's distribution shifts drastically given a refined context, it remains entangled with the unknown future and is inherently unsafe to unmask.

Guided by this insight, we introduce Stability-Weighted Decoding (SWD), a training-free, plug-and-play decoding strategy. We fundamentally reformulate token selection as a risk-constrained optimization problem: maximizing confidence utility subject to a strict stability constraint. 
This formulation mathematically yields a closed-form modulation mechanism, where the base score is weighted by an exponential decay term proportional to its instability ($\exp(-\lambda \cdot D_{\text{KL}})$). By suppressing the scores of unstable candidates, SWD effectively prevents the model from committing to temporally unconverged tokens, ensuring that unmasking decisions are grounded in both confidence and stability.

We validate SWD across two state-of-the-art dLLMs (LLaDA-8B-Instruct\cite{llada} and Dream-v0-Instruct-7B\cite{dream}) on diverse benchmarks, including code generation (HumanEval\cite{humaneval}, MBPP\cite{mbpp}) and mathematical reasoning (GSM8K\cite{gsm8k}, MATH500\cite{math500}). Our contributions are threefold:

\begin{itemize}
    \item We propose \textbf{Stability-Weighted Decoding (SWD)}, a training-free, plug-and-play strategy that acts as a universal modulator for arbitrary score-based decoding policies. By explicitly suppressing the scores of unstable tokens, SWD effectively prevents the model from committing to statistically plausible but temporally unconverged candidates.
    
    \item We theoretically prove that temporal instability constitutes a strict lower bound on the mutual information between a token and the unrevealed context. This provides a rigorous mathematical justification for using stability constraints to prevent premature unmasking.
    
    \item We provide comprehensive empirical validation across diverse models, datasets, scoring metrics, and selection strategies. Results demonstrate that SWD consistently enhances performance across these varied settings, highlighting its exceptional robustness and versatility.
\end{itemize}

% \begin{figure*}[h]
%     \centering
%     % --- 上面的图 ---
%     \begin{subfigure}{\linewidth}
%         \centering
%         % 请替换文件名
%         \includegraphics[width=\linewidth]{figs/pipeline1.png} 
%         % \caption{Pipeline of dLLM with confidence score and Top-1 selection strategy.} % 上图的子标题
%         \label{fig:pipeline_top}
%     \end{subfigure}
    
%     % \par\bigskip % 添加一些垂直间距
    
%     % --- 下面的图 ---
%     \begin{subfigure}{\linewidth}
%         \centering
%         % 请替换文件名
%         \includegraphics[width=\linewidth]{figs/pipeline2.png}
%         % \caption{Pipeline of dLLM with confidence SWD enhanced score and Top-1 selection strategy} % 下图的子标题
%         \label{fig:pipeline_bottom}
%     \end{subfigure}
    
%     % --- 总标题 ---
%     \caption{Overview of SWD with confidence score and Top-1 selection strategy.}
%     \label{fig:pipeline}
% \end{figure*}

\begin{figure*}[t]
    \centering
    % --- 上面的图 (Baseline) ---
    \begin{subfigure}{\linewidth}
        \centering
        \includegraphics[width=\linewidth]{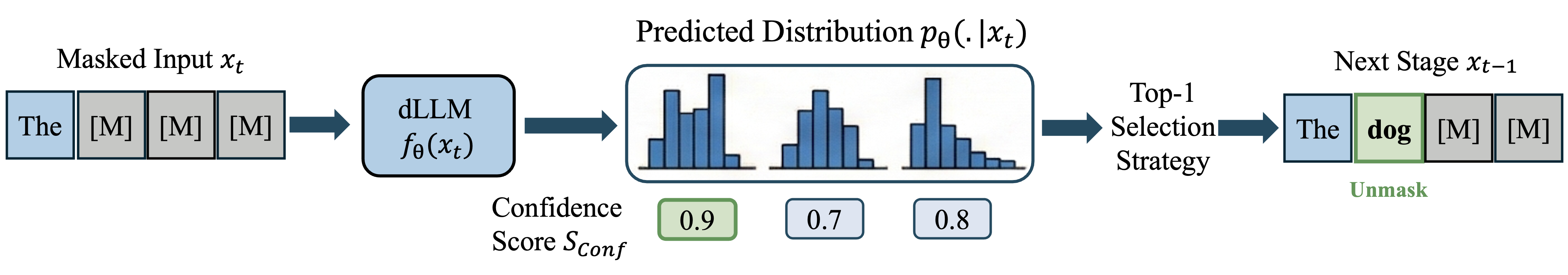} 
        \label{fig:pipeline_top}
    \end{subfigure}
    
    % \vspace{3pt} % 稍微加一点间距，把上下两张图隔开
    
    % --- 下面的图 (SWD) ---
    \begin{subfigure}{\linewidth}
        \centering
        \includegraphics[width=\linewidth]{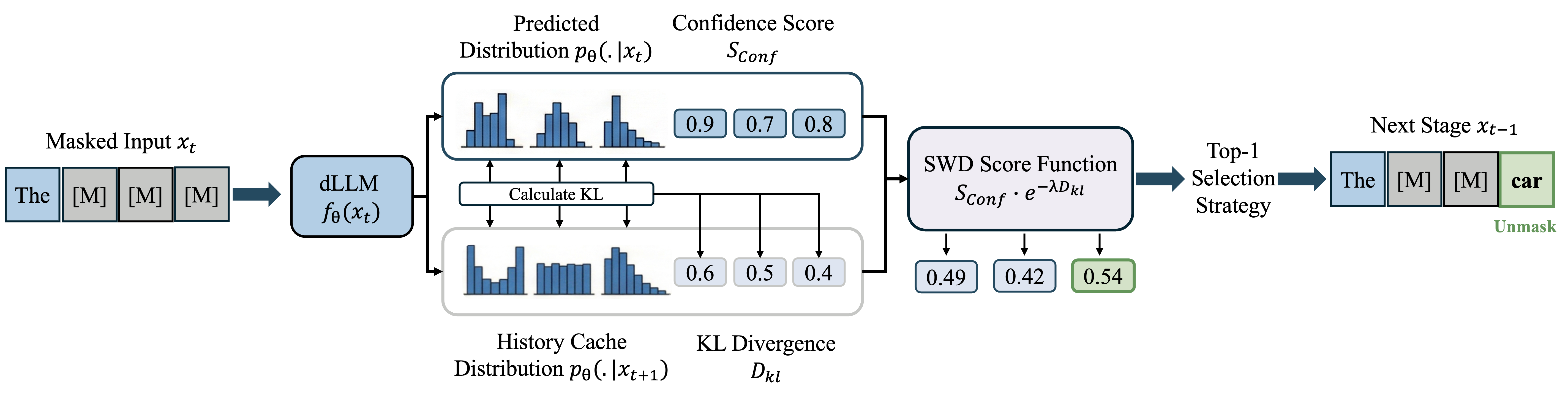}
        \label{fig:pipeline_bottom}
    \end{subfigure}
    
    % --- 总标题 (Unified Caption) ---
    \caption{\textbf{Overview of decoding pipelines using Confidence Score and Top-1 Selection.} 
    \textbf{(Top) Standard dLLM:} The model relies solely on the static confidence score at the current step. In this case, it prematurely commits to the high-confidence token "dog" ($0.9$) despite its instability.
    \textbf{(Bottom) Stability-Weighted Decoding (SWD):} By incorporating the historical distribution, SWD identifies the instability of "dog" (high KL divergence) and penalizes its score ($0.9 \to 0.49$). Consequently, the selection shifts to the more stable candidate "car" ($0.8 \to 0.54$), effectively correcting the error.}
    \label{fig:pipeline}
\end{figure*}

\section{Related Work}
\subsection{Diffusion Language Models}
The application of diffusion models to discrete data was pioneered by D3PM \cite{d3pm}, laying the foundation for discrete diffusion language models (dLLMs). Subsequent works, including CTMC \cite{ctmc}, SEDD \cite{sedd}, and Block Diffusion \cite{blockdiffusion}, further formalized this framework and introduced various extensions. Among these approaches, masked diffusion models, which adopt absorbing states, have demonstrated particularly strong empirical performance \cite{simple_effective}. This paradigm has recently been scaled to large instruction-tuned models, such as the LLaDA and Dream families \cite{llada, dream}, challenging the long-standing dominance of autoregressive language models. 
More recent efforts have continued to enhance dLLMs in terms of capability and scalability \cite{diffucoder, llada1.5, llada2.0, seed}, and extending them to multimodal settings \cite{lladav, mmada, dimple}. In addition, several studies have provided theoretical analyses that investigate the efficiency and limitations of dLLMs from different perspectives \cite{limitationofdiff, timeagnostic, trainkim}.

\subsection{Decoding Strategies for Diffusion Language Models}
Diffusion language models generate text by iteratively denoising a fully masked sequence, where decoding quality and efficiency are governed by how masked tokens are selected for unmasking at each iteration. Early approaches adopt random or uniform unmasking schedules \cite{llada}, while later work introduces score-based criteria to prioritize positions according to model uncertainty signals, such as confidence, entropy\cite{koh2025conditionalmaskdiscretediffusionentropy}, or margin \cite{trainkim}, as well as more sophisticated scoring functions \cite{pcsampler}. 
To accelerate generation, several methods unmask multiple tokens per step using threshold-based rules \cite{fastdllm, klass} or entropy-bound criteria \cite{ebsampler}. 
Recently, some studies exploit temporal or historical information during decoding, showing that tokens whose predictions remain stable across iterations are more reliable candidates for unmasking, thereby improving decoding robustness without altering the underlying diffusion process \cite{klass, creditdecoding, timeisfeature, beyondconf}. 
Another line of work explicitly introduces remasking strategies, which allow previously unmasked tokens to be masked again and revised in later iterations to correct early mistakes or improve global coherence \cite{pathplanning, wino}. Orthogonal to decoding policy design, several works focus on engineering optimizations tailored to diffusion-style generation, such as KV-cache mechanisms that reduce redundant computation and accelerate inference \cite{fastdllmv2, dkvcache, d2cache}.

\section{Preliminaries}
\label{sec:preliminaries}

In this section, we formalize the notation, review the framework of Discrete Diffusion Language Models, and discuss existing parallel decoding methods.

\subsection{Discrete Diffusion Language Models}
\textbf{Notation}
Let $\mathcal{V}$ denote a vocabulary set of size $K = |\mathcal{V}|$. We denote $\mathbf{x}_t = (x_t^1, \dots, x_t^L)$ as a sequence of length $L$ at time step $t$, where each token $x_t^i \in \mathcal{V} \cup \{[\texttt{MASK}]\}$. 
The diffusion process operates over discrete time steps $t \in \{0, \dots, T\}$, where $t=0$ represents the clean data $\mathbf{x}_0$ and $t=T$ represents the fully masked state $\mathbf{x}_T = [\texttt{MASK}]^L$. 
We let $M_t \subseteq \{1, \dots, L\}$ denote the set of masked indices at step $t$, $\alpha_t \in [0, 1]$ represents the mask schedule (the proportion of masked tokens), satisfying $0 = \alpha_0 < \dots < \alpha_T = 1$.

\textbf{Reverse Process.}

The decoding (reverse) process iteratively denoises the sequence from the fully masked state $\mathbf{x}_T$ to progressively less corrupted states until reaching the clean sequence $\mathbf{x}_0$. 
The core component of a dLLM is the denoising network $f_\theta$, which maps a masked sequence $\mathbf{x}_t$ to a logit tensor $f_\theta(\mathbf{x}_t) \in \mathbb{R}^{L \times K}$. 
The network predicts the original clean token $x_0^i$ directly, leading the categorical distribution:
\begin{equation}
    p_\theta^i(\cdot \mid \mathbf{x}_t) = \operatorname{Cat}\Big( \operatorname{Softmax}\big( [f_\theta(\mathbf{x}_t)]^i \big) \Big).
    \label{eq:denoising_net}
\end{equation}

Following \cite{simple_effective}, the posterior  distribution for the reverse step is derived as:

\begin{equation}
    p_\theta(x_{t-1}^i \mid \mathbf{x}_t) = 
    \begin{cases} 
    % 使用 aligned 包裹第一行的内容
    \begin{aligned}
        & \operatorname{Cat}\Big( \tfrac{\alpha_{t-1}}{\alpha_t} \mathbf{e}_{\text{M}} + \tfrac{\alpha_t - \alpha_{t-1}}{\alpha_t} p_\theta^i(\cdot \mid \mathbf{x}_t) \Big) \\
        & \hspace{12em} \text{if } i \in M_t 
    \end{aligned} \\
    % 第二行保持原样
    \delta(x_{t-1}^i = x_t^i) \qquad \ \ \hspace{4em} \text{if } i \notin M_t
    \end{cases}
    \label{eq:reverse_transition}
\end{equation}

where $\mathbf{e}_{\textbf{M}}$ denotes the one-hot vector of the masked token. Eq. (\ref{eq:reverse_transition}) implies for each masked position $i \in M_t$, the token is either unmasked, sampled from the predicted clean distribution, with probability $(\alpha_t - \alpha_{t-1})/\alpha_t$, or remains masked with the complementary probability.

\subsection{Generalized Parallel Decoding Framework}
\label{sec:generalized_decoding}

To accelerate inference, modern dLLMs adopt a parallel decoding paradigm. This process selects a subset of masked tokens $S_t \subseteq M_t$ to unmask simultaneously, thereby reducing the number of iterations and improving efficiency. In multi-token prediction process, predictions at masked positions are assumed conditionally independent given $x_t$, for any subset $S_t \subseteq M_t$ to unmask, we assume the joint distribution be approximated by the product of marginals:

\begin{equation}
    q_\theta(\mathbf{x}^{S_t} \mid \mathbf{x}_t) \approx \prod_{i \in S_t} p_{\theta}^{i}(x^{i} \mid \mathbf{x}_{t}).
    \label{eq:multiproduct}
\end{equation}
% where $\mathbf{x}_t^S = \{  x_t^i\} _ {i\in S_t}$. Prior theoretical work\cite{fastdllm} shows the product of marginals is close to the true joint distribution when confidence score(which is defined below) is close to 1, shows that parallel decoding is effective to sequential greedy decoding.

Prior theoretical work \cite{fastdllm} demonstrates that this product of marginals closely approximates the true joint distribution when the model's confidence score (which is defined below) is close to 1, validating the parallel decoding effectiveness is equivalent to sequential greedy decoding.

Formally, parallel decoding consists of two stages: Scoring and Selection.

\textbf{Scoring Functions.} 
% Existing methods evaluate the "confidence" of each masked position $i \in M_t$ using a heuristic score $\mathcal{S}(i)$ derived from the predicted distribution $p_\theta(\cdot \mid \mathbf{x}_t)_i$. Common choices include:

Methods evaluate the reliability of each masked position $i \in M_t$ using score $\mathcal{S}(i)$ derived from $p_\theta(\cdot \mid \mathbf{x}_t)_i$. Common metrics include:

\begin{itemize}
    \item \textbf{Confidence:} $\mathcal{S}_{\text{conf}}(i) = \max_{v \in \mathcal{V}} p_\theta(x_0^i=v \mid \mathbf{x}_t)$. This measures the probability of the mode; a higher score indicates greater model certainty.
    \item \textbf{Entropy:} $\mathcal{S}_{\text{ent}}(i) = -H(p_\theta^i(\cdot \mid \mathbf{x}_t))$. Low entropy implies a peaked distribution and high certainty.
    \item \textbf{Margin:} $\mathcal{S}_{\text{margin}}(i) = p_\theta^i(v_{(1)} \mid \mathbf{x}_t) - p_\theta^i(v_{(2)} \mid \mathbf{x}_t)$, where $v_{(1)}$ and $v_{(2)}$ are the top-1 and top-2 predicted tokens. This reflects the discriminative power of the model; a higher score means more confident, less curious
\end{itemize}

\textbf{Selection Strategies.} 
Given the scores, a policy determines the subset of tokens $S_t \subset M_t$ to unmask.

\begin{itemize}
    \item \textbf{Top-$k$.} The simplest approach is to unmask a fixed number of tokens $k$ with the highest scores at each step.

\item \textbf{Threshold-based.} Policies like Fast-dLLM \cite{fastdllm} select all positions where the score exceeds the threshold $\tau$, i.e., $S_t = \{i \in M_t \mid \mathcal{S}(i) > \tau\}$.

% \textbf{Adaptive policies} (e.g., EB-Sampler \citep{ebsampler}) rank positions by their scores and select the top candidates until a cumulative uncertainty budget is met.

% $\sum_{l \in U} H(p^{\theta}(x^{l} \mid x^{\bar{M}})) - \max_{l \in U} H(p^{\theta}(x^{l} \mid x^{\bar{M}})) \leq \gamma.$
\item \textbf{EB-Sampler.} Method EB-Sampler \cite{ebsampler} dynamically determine the batch size by rank positions by their scores and select the top candidates until a cumulative uncertainty budget $\gamma$ is met.
\begin{equation}
    \sum_{i \in S_t} H(p^{i}_\theta) - \max_{j \in S_t} H(p^{j}_\theta) \leq \gamma,
    \label{eq:eb_bound}
\end{equation}

\end{itemize}

\begin{algorithm}[tb]
   \caption{Stability-Weighted Decoding (SWD)}
   \label{alg:swd}
\begin{algorithmic}
   \STATE {\bfseries Input:} Masked sequence $\mathbf{x}_T$, denoiser $f_\theta$, penalty $\lambda$
   \STATE {\bfseries Output:} Generated sequence $\mathbf{x}_0$
   \STATE \textbf{Initialize:} $\mathbf{p}_{T+1} \leftarrow \text{Uniform}(\mathcal{V})$, $M_T \leftarrow \{1, \dots, L\}$
   
   \FOR{$t = T$ {\bfseries down to} $1$}
       \STATE \textbf{// 1. Predict \& Score}
       \STATE Compute probabilities: $\mathbf{p}_t \leftarrow \operatorname{softmax}(f_\theta(\mathbf{x}_t))$
       \STATE Compute base scores: $\mathcal{S} \leftarrow \Phi(\mathbf{p}_t)$ \textit{(e.g., Confidence)}
       
       \STATE \textbf{// 2. Stability Modulation (SWD)}
       \STATE Compute instability: $\mathcal{D}_{\text{KL}} \leftarrow \sum \mathbf{p}_{t+1} \log(\mathbf{p}_{t+1} / \mathbf{p}_t)$
       \STATE Update scores: $\mathcal{S} \leftarrow \mathcal{S} \cdot \exp(-\lambda \cdot \mathcal{D}_{\text{KL}})$
       
       \STATE \textbf{// 3. Select \& Update (Example: Top-1)}
       \STATE Identify best candidate: $i^* \leftarrow \arg\max_{i \in M_t} \mathcal{S}^{(i)}$
       \STATE Unmask: $x_{t-1}^{(i^*)} \leftarrow \arg\max \mathbf{p}_t^{(i^*)}$
       \STATE Update Mask: $M_{t-1} \leftarrow M_t \setminus \{i^*\}$
       \STATE Cache history: $\mathbf{p}_{t+1} \leftarrow \mathbf{p}_{t}$
   \ENDFOR
   \STATE \textbf{return} $\mathbf{x}_0$
\end{algorithmic}
\end{algorithm}

\section{Methodology}
\label{sec:method}

We propose Stability-Weighted Decoding (SWD), a training-free strategy designed to filter out premature tokens by penalizing temporal instability. SWD acts as a universal "stability modulator" compatible with any score-based decoding method. In this section, we first establish the information-theoretic foundation, connecting instability to dependency unmasked places (Sec. \ref{sec:method_theory}), then derive the risk-constrained scoring function (Sec. \ref{sec:swd_scoring}), and finally present the implementation (Sec. \ref{sec:implementation}).

\begin{table*}[t]
\centering
\small
\renewcommand{\arraystretch}{1.15} % 稍微增加行高，看着更舒服
\caption{\textbf{Main Results on Parallel Decoding.} Performance comparison of Stability-Weighted Decoding (SWD) against standard baselines across three representative scoring metrics: Confidence, Margin, and Entropy. Base denotes the vanilla EB Sampler strategy, and +SWD denotes our method with $\lambda=5$. SWD consistently improves Accuracy (\textbf{Acc}) while maintaining high efficiency (lower \textbf{NFE}). Best Accuracy is marked in \textbf{bold}.}
\label{tab:main_results}
\resizebox{\textwidth}{!}{
\begin{tabular}{llcccccccc}
\toprule
\multirow{2}{*}{\textbf{Metric}} & \multirow{2}{*}{\textbf{Method}} & \multicolumn{2}{c}{\textbf{HumanEval}} & \multicolumn{2}{c}{\textbf{MBPP}} & \multicolumn{2}{c}{\textbf{GSM8K}} & \multicolumn{2}{c}{\textbf{MATH500}} \\
\cmidrule(lr){3-4} \cmidrule(lr){5-6} \cmidrule(lr){7-8} \cmidrule(lr){9-10}
& & \textbf{Acc} $\uparrow$ & \textbf{NFE} $\downarrow$ & \textbf{Acc} $\uparrow$ & \textbf{NFE} $\downarrow$ & \textbf{Acc} $\uparrow$ & \textbf{NFE} $\downarrow$ & \textbf{Acc} $\uparrow$ & \textbf{NFE} $\downarrow$ \\
\midrule

% ==================== PANEL A: LLaDA ====================
% 使用背景色行作为 Panel 标题，替代 Multirow Model
\multicolumn{10}{c}{\cellcolor{graybg}\textbf{Panel A: LLaDA-8B}} \\
\midrule

% --- Confidence ---
\multirow{2}{*}{Confidence} 
  & Base & 28.0 & 102.9 & 23.4 & 110.3 & 31.5 & 224.1 & 19.6 & 226.6 \\
  & \textbf{+ SWD} & \textbf{36.0} & 77.2 & \textbf{34.8} & 66.8 & \textbf{67.3} & 163.2 & \textbf{26.2} & 205.7 \\
\midrule
% --- Margin ---
\multirow{2}{*}{Margin} 
  & Base & 23.2 & 100.8 & 26.0 & 108.3 & 38.7 & 215.8 & 21.8 & 221.0 \\
  & \textbf{+ SWD} & \textbf{29.9} & 66.4 & \textbf{34.0} & 65.9 & \textbf{64.4} & 166.9 & \textbf{25.8} & 198.4 \\
\midrule

% --- Entropy ---
\multirow{2}{*}{Entropy} 
  & Base & 28.7 & 108.9 & 18.4 & 114.1 & 26.3 & 226.6 & 19.6 & 230.7 \\
  & \textbf{+ SWD} & \textbf{32.3} & 103.5 & \textbf{26.6} & 108.0 & \textbf{48.7} & 191.8 & \textbf{20.4} & 224.3 \\

\midrule
% ==================== PANEL B: DREAM ====================
\multicolumn{10}{c}{\cellcolor{graybg}\textbf{Panel B: Dream-7B}} \\
\midrule

% --- Confidence ---
\multirow{2}{*}{Confidence} 
  & Base & 12.2 & 179.7 & 29.4 & 212.4 & \textbf{29.3} & 286.6 & 10.8 & 364.1 \\
  & \textbf{+ SWD} & \textbf{24.4} & 67.4 & 26.2 & 86.2 & \textbf{61.6} & 119.3 & \textbf{35.0} & 151.6 \\
\midrule

% --- Margin ---
\multirow{2}{*}{Margin} 
  & Base & 10.4 & 173.4 & \textbf{23.4} & 202.6 & 29.5 & 275.6 & 11.4 & 354.9 \\
  & \textbf{+ SWD} & \textbf{14.6} & 63.9 & 17.6 & 69.9 & \textbf{61.0} & 111.5 & \textbf{33.8} & 135.4 \\
\midrule

% --- Entropy ---
\multirow{2}{*}{Entropy} 
  & Base & 21.3 & 188.4 & 33.2 & 174.7 & 28.4 & 292.3 & 8.4 & 382.4 \\
  & \textbf{+ SWD} & \textbf{40.9} & 84.9 & \textbf{42.6} & 109.4 & \textbf{65.0} & 132.1 & \textbf{38.6} & 181.6 \\

\bottomrule
\end{tabular}
}
\end{table*}

\subsection{Theoretical Framework: Instability as Dependency}
\label{sec:method_theory}

Our core insight is that the temporal instability of a token serves as a rigorous proxy for its entanglement with the remaining masked context. We formalize this intuition through the following theorem.

\begin{theorem}[\textbf{Sensitivity-Dependency Bound}]
\label{thm:bound}
Let $\mathcal{D}_{\text{temp}}^{(i)} = D_{\text{KL}}(p_\theta^i(\cdot|\mathbf{x}_t) \parallel p_\theta^i(\cdot|\mathbf{x}_{t+1}))$ be the temporal instability of token $i$ at step $t$. The expected instability is a strict lower bound on the mutual information between the token and the unrevealed masked context $\mathbf{U}_{t+1}$:
\begin{equation}
    I(x_0^i; \mathbf{U}_{t+1} \mid \mathbf{x}_{t+1}) \ge \mathbb{E}\left[ \mathcal{D}_{\text{temp}}^{(i)} \right].
    \label{eq:dependency_bound}
\end{equation}
\end{theorem}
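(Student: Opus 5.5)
The plan is to read the theorem under an idealized, Bayes-consistent model. We assume the denoiser's marginals are the true posteriors: $p_\theta^i(\cdot\mid\mathbf{x}_{t+1}) = P(x_0^i=\cdot\mid\mathbf{x}_{t+1})$ and $p_\theta^i(\cdot\mid\mathbf{x}_{t}) = P(x_0^i=\cdot\mid\mathbf{x}_{t})$. The step from $\mathbf{x}_{t+1}$ to $\mathbf{x}_t$ reveals the tokens at $S_{t+1}\subseteq M_{t+1}$, i.e. a function of $\mathbf{U}_{t+1}=\mathbf{x}_0^{M_{t+1}}$ (together with the selection randomness, which we take to be independent of $x_0^i$ given $\mathbf{x}_0$ and $\mathbf{x}_{t+1}$). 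Concretely, write $\mathbf{x}_t = g(\mathbf{x}_{t+1}, \mathbf{U}_{t+1}, \xi)$. The expectation in \eqref{eq:dependency_bound} is over $\mathbf{x}_0$, $\mathbf{x}_{t+1}$ and the transition to $\mathbf{x}_t$.

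\textbf{Step 1 (expected KL is a conditional mutual information).} For a posterior update, the expected KL between the refined and the coarse posterior is a mutual information:
\[
\mathbb{E}\big[D_{\text{KL}}(P(x_0^i\mid\mathbf{x}_t)\,\|\,P(x_0^i\mid\mathbf{x}_{t+1}))\big] = I(x_0^i;\mathbf{x}_t\mid\mathbf{x}_{t+1}).
\]
To verify this, expand $I(x_0^i;\mathbf{x}_t\mid\mathbf{x}_{t+1}) = \mathbb{E}\log\frac{P(x_0^i\mid \mathbf{x}_t,\mathbf{x}_{t+1})}{P(x_0^i\mid\mathbf{x}_{t+1})}$. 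Then use that $\mathbf{x}_t$ determines $\mathbf{x}_{t+1}$ on the unmasked coordinates, and that under the absorbing forward process $\mathbf{x}_{t+1}$ is a further masking of $\mathbf{x}_t$ (a Markov chain $x_0^i\to\mathbf{x}_t\to\mathbf{x}_{t+1}$). Together these give $P(x_0^i\mid\mathbf{x}_t,\mathbf{x}_{t+1})=P(x_0^i\mid\mathbf{x}_t)$. Taking the inner expectation over $x_0^i$ then yields exactly the KL in the direction used in $\mathcal{D}_{\text{temp}}^{(i)}$.

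\textbf{Step 2 (data processing).} Since $\mathbf{x}_t=g(\mathbf{x}_{t+1},\mathbf{U}_{t+1},\xi)$ with $\xi$ independent, we have a conditional Markov chain $x_0^i \to (\mathbf{U}_{t+1},\xi) \to \mathbf{x}_t$ given $\mathbf{x}_{t+1}$. The conditional data-processing inequality then gives
\[
I(x_0^i;\mathbf{x}_t\mid\mathbf{x}_{t+1}) \le I(x_0^i;\mathbf{U}_{t+1},\xi\mid\mathbf{x}_{t+1}) = I(x_0^i;\mathbf{U}_{t+1}\mid\mathbf{x}_{t+1}),
\]
where the equality uses the chain rule and $I(x_0^i;\xi\mid\mathbf{U}_{t+1},\mathbf{x}_{t+1})=0$. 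If $\mathbf{U}_{t+1}$ is meant to exclude position $i$ itself, the same argument works with $\mathbf{U}_{t+1}$ the masked tokens revealed at this step, provided $i\notin S_{t+1}$. Combining Steps 1 and 2 gives \eqref{eq:dependency_bound}.

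\textbf{Main obstacle.} The hard part is Step 1, namely justifying that the model distributions equal true posteriors and that the KL direction matches. The identity requires the coarse distribution to be the mixture $\mathbb{E}[P(x_0^i\mid\mathbf{x}_t)\mid\mathbf{x}_{t+1}]$, i.e. martingale consistency. For an imperfect $p_\theta$ this holds only approximately; I would either state it as an explicit assumption or bound the discrepancy via the compensation identity $\mathbb{E}D(P_t\|Q)=I+D(\bar P\|Q)$, which shows the equality above degrades only by a nonnegative mismatch term. The word ``strict'' should be read as ``$\ge$, with the gap equal to the information lost by revealing only $S_{t+1}$ rather than all of $\mathbf{U}_{t+1}$.''
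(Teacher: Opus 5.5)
Your proof is correct under the Bayes-consistency assumption you state, but it reaches the bound by a different route than the paper. Your Step 1 is the Lemma the paper proves in its appendix. You make explicit the Markov property $P(x_0^i\mid\mathbf{x}_t,\mathbf{x}_{t+1})=P(x_0^i\mid\mathbf{x}_t)$, which the paper uses silently when it substitutes $p(v\mid\mathbf{x}_t)=p(v,\mathbf{x}_t\mid\mathbf{x}_{t+1})/p(\mathbf{x}_t\mid\mathbf{x}_{t+1})$.

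The divergence is in Step 2. The paper asserts that $\mathbf{U}_{t+1}$ carries the same information as $(\mathbf{x}_t,\mathbf{U}_t)$ given $\mathbf{x}_{t+1}$. It then expands by the chain rule and drops the residual $I(x_0^i;\mathbf{U}_t\mid\mathbf{x}_t,\mathbf{x}_{t+1})\ge 0$. You use only the one-way relation $\mathbf{x}_t=g(\mathbf{x}_{t+1},\mathbf{U}_{t+1},\xi)$ together with data processing. Both arguments reduce to chain rule plus nonnegativity, but yours assumes less:
\begin{itemize}
\item When the selection is random, $(\mathbf{x}_t,\mathbf{U}_t)$ also encodes the new mask pattern $M_t$, so the paper's equivalence is not literal.
\item Repairing it requires exactly your condition $I(x_0^i;\xi\mid\mathbf{U}_{t+1},\mathbf{x}_{t+1})=0$. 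For the paper's $\mathbf{U}_{t+1}$ this holds automatically, since $x_0^i$ is a function of $(\mathbf{x}_{t+1},\mathbf{U}_{t+1})$.
\end{itemize}
That same fact is why your remark about excluding position $i$ is worthwhile. Under the paper's definition, the left-hand side is just $H(x_0^i\mid\mathbf{x}_{t+1})$ for $i\in M_{t+1}$, and the theorem collapses to $I\le H$. Your variant uses only the other masked tokens and takes $i\in M_t$, which is the case that matters in decoding. It is non-vacuous and still follows from your data-processing argument. There, however, the independence of $\xi$ becomes a genuine assumption; it holds, for example, when the selection depends only on $\mathbf{x}_{t+1}$ and content-independent noise.

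One correction to your closing remark. The compensation identity $\mathbb{E}\,D(P_t\|Q)=I+D(\bar P\|Q)$ puts the mismatch term on the wrong side for this theorem. It \emph{adds} to the expected KL, which is the quantity the theorem bounds from above. So for an imperfect $p_\theta$ the inequality can fail outright: the expected KL is unbounded, while $I(x_0^i;\mathbf{U}_{t+1}\mid\mathbf{x}_{t+1})\le\log K$. Bayes-consistency of $p_\theta$ cannot be relaxed this way and must remain an explicit hypothesis, as it implicitly is in the paper.
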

\begin{proof}
    See Appendix \ref{app:theoretical_proofs}.
\end{proof}

\textbf{Implication.} Theorem \ref{thm:bound} establishes a necessary condition for independence: if a token is unstable (high KL divergence), it is mathematically guaranteed to be dependent on the masked context. Unmasking such a token is premature, as it has not yet decoupled from the latent structure of the sequence. Consequently, delaying unmasking until tokens have stabilized is essential for improving generation quality.

\subsection{Stability-Weighted Scoring}
\label{sec:swd_scoring}

Building upon Theorem \ref{thm:bound}, we identify temporal instability is a quantifiable lower bound on the dependency of masked context. To ensure safe decoding, we reformulate the token selection process as a risk-constrained optimization problem.

\textbf{Universal Formulation.}
To maintain generality, let $\mathcal{S}_{\text{base}}(i)$ denote \textit{any} arbitrary utility score provided by an existing decoding policy (e.g., Confidence, Margin, or Entropy). Our objective is to maximize the cumulative utility of the selected subset $S \subset M_t$, subject to the constraint that the aggregate dependency risk remains within a safety budget $\lambda$.

The optimization problem is formulated as follows:
\begin{equation}
    \max_{S} \sum_{i \in S} \log \mathcal{S}_{\text{base}}(i) \quad \text{s.t.} \quad \sum_{i \in S} \mathcal{D}_{\text{temp}}^{(i)} \le \lambda,
\end{equation}
where the constraint effectively bounds the total Mutual Information $I(S; \mathbf{U}_{t+1})$ according to Theorem \ref{thm:bound}.

\begin{figure*}[t]
    \centering
    \includegraphics[width=0.85\linewidth]{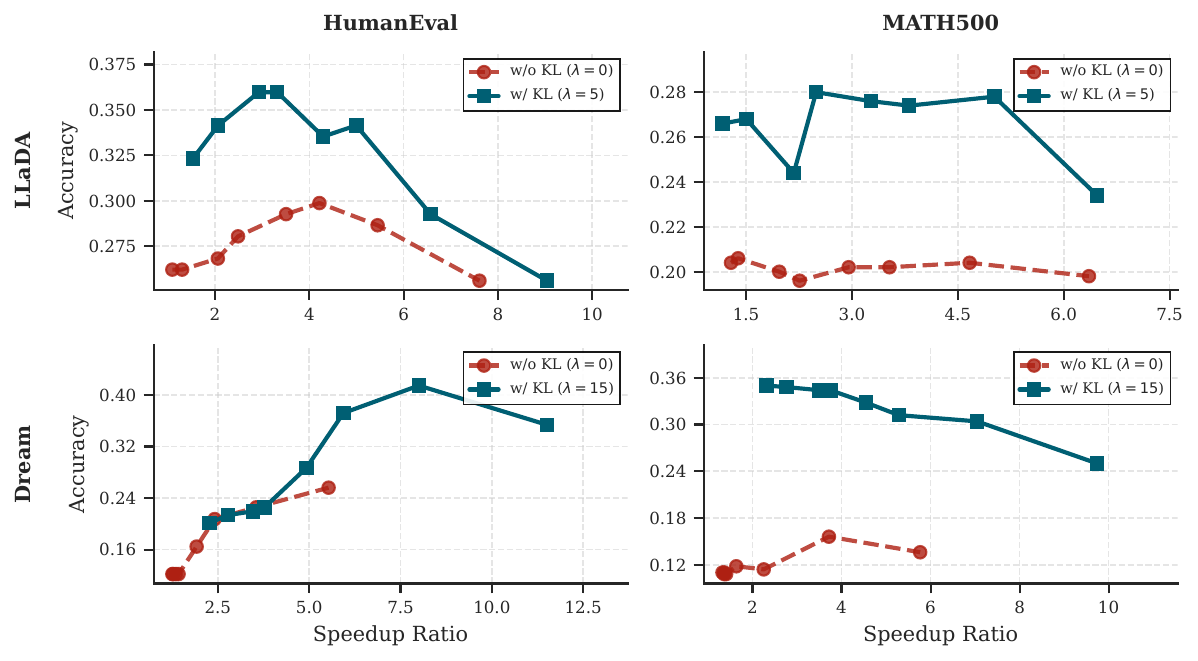}
    \caption{\textbf{Efficiency-Accuracy Trade-off.} We evaluate generation quality (Accuracy) against inference speed (Speedup Ratio) by varying the uncertainty budget $\gamma$ on HumanEval and MATH500. 
    The Solid Blue Line represents our SWD-enhanced decoding, while the Dashed Red Line denotes the standard Confidence baseline. 
    SWD consistently perform well, achieving significantly higher accuracy at equivalent or greater speedup levels compared to the baseline, particularly effectively preventing performance collapse in high-acceleration regimes.}
    \label{fig:efficiency}
\end{figure*}

\textbf{The SWD Score.}
By applying the method of Lagrange multipliers, we relax the constrained decoding problem into an unconstrained maximization of the Lagrangian:

\begin{equation}
    \mathcal{L}(S) = \sum_{i \in S} \left( \log \mathcal{S}_{\text{base}}(i) - \lambda \cdot \mathcal{D}_{\text{temp}}^{(i)} \right),
\end{equation}
where $\lambda \ge 0$ controls the penalty imposed on unstable tokens.

% Because the objective decomposes over tokens, maximizing 
% $\mathcal{L}(S)$ reduces to selecting tokens that contribute most to the overall score.
% Based on this observation, we exponentiate the per-token term and define the resulting quantity as the Stability-Weighted Decoding (SWD) score.

Since the objective decomposes into independent token-level terms, maximizing $\mathcal{L}(S)$ can be performed by evaluating each token individually.
We therefore exponentiate the per-token contribution and define the resulting quantity as the Stability-Weighted Decoding (SWD) score:
\begin{equation}
    \mathcal{S}_{\text{SWD}}(i) = \underbrace{\mathcal{S}_{\text{base}}(i)}_{\text{Base Utility}} \cdot \underbrace{\exp\left( -\lambda \cdot \mathcal{D}_{\text{temp}}^{(i)} \right)}_{\text{Stability Modulator}}.
    \label{eq:swd_score}
\end{equation}

Eq.~(\ref{eq:swd_score}) shows that SWD acts as a multiplicative modulation over an arbitrary decoding score.
Tokens with large temporal divergence are exponentially down-weighted, while stable tokens retain their original scores.
As a result, SWD down-weights tokens that are unstable and dependent on masked context, without requiring any modification or retraining of the underlying model and existing decoding method.

\subsection{Implementation}
\label{sec:implementation}

SWD is straightforward to implement and requires no modifications to the model architecture or the underlying decoding procedure. Algorithm \ref{alg:swd} shows how to use SWD in Top-1 decoding as an example. 
It can be integrated into existing decoding methods with minimal changes to the codebase.
Figure~\ref{fig:code_comparison} illustrates a code by comparing a standard confidence-based decoding step with its SWD-augmented variant.
By caching the logits from the previous iteration, SWD applies a stability-based modulation to the confidence score, with negligible additional computational overhead.

\begin{figure*}[t] % 使用 figure* 跨双栏，[t] 置顶
    \centering
    % 宽度设置为 \textwidth (页面文本宽度)，这样图片会自动撑满两栏
    \includegraphics[width=\textwidth]{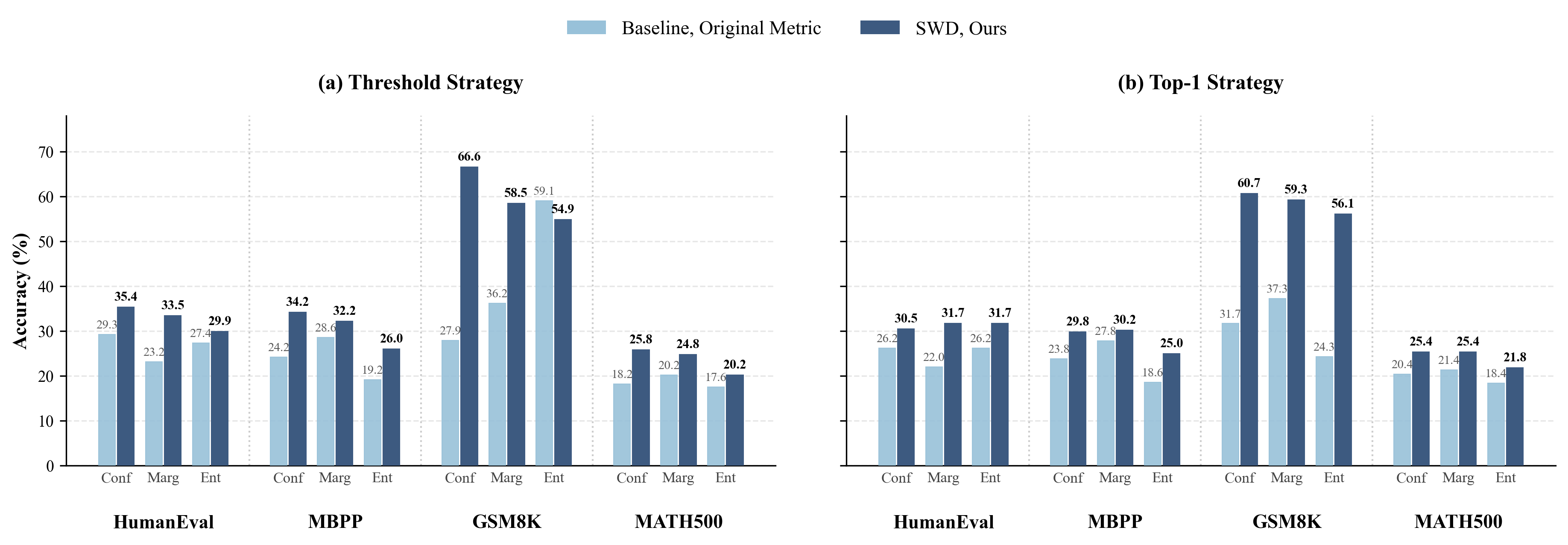}
    
    % 调整图片与 Caption 之间的距离 (根据需要微调)
    % \vspace{-15pt} 
    
    \caption{\textbf{Detailed selection strategies analysis.} Comparisons across selection policies: \textbf{(a) Threshold Strategy} and \textbf{(b) Top-1 Strategy}. SWD Blue consistently outperforms the Baseline Gray.}
    \label{fig:threshold_topk}
\end{figure*}

\section{Experiments}
\label{sec:experiments}

In this section, we present a comprehensive empirical evaluation of Stability-Weighted Decoding (SWD). Our experiments are designed to validate SWD as a generalized decoding mechanism that seamlessly integrates with existing frameworks.
Specifically, we evaluate SWD across multiple state-of-the-art models and benchmarks, demonstrating its universal compatibility with arbitrary scoring functions and diverse selection strategies.
Furthermore, we examine the robustness of SWD under varying decoding settings, such as different block sizes in Block Diffusion and varying hyperparameters for selection policies to assess the accuracy-efficiency trade-off in terms of the number of function evaluations (NFE), confirming SWD's effectiveness across various acceleration ratios. 
Finally, we conduct ablation studies to analyze the contribution of hyperparameter $\lambda$.

\subsection{Experimental Setup}
\label{sec:exp_setup}

We evaluate the effectiveness of SWD on two state-of-the-art diffusion language models: LLaDA-8B-Instruct~\cite{llada} and Dream-v0-Instruct-7B~\cite{dream}. 
Our evaluation covers four representative benchmarks: HumanEval~\cite{humaneval} and MBPP~\cite{mbpp} for code generation, and GSM8K~\cite{gsm8k} and MATH500~\cite{math500} for mathematical reasoning. 
Following prior work~\cite{li2025beyond}, the generation length is fixed to 256 for code tasks and 512 for reasoning tasks. 
To demonstrate universality, we compare SWD against three standard scoring metrics: Confidence, Margin, and Entropy.
Unless otherwise specified, all methods employ the EB-Sampler~\cite{ebsampler} as the default selection policy with an uncertainty budget of $\gamma=0.1$. 
For our method, we report results using the optimal penalty strength $\lambda$ identified in the ablation study (see Section~\ref{sec:ablation_studies}).
All experiments were performed on 4 NVIDIA A100 GPUs.

\subsection{Universal Enhancement across Scoring Metrics}
\label{sec:main_results}

% We first verify SWD as a universal modulator compatible with arbitrary scoring functions. Table \ref{tab:main_results} presents the performance of SWD integrated into Confidence, Margin, and Entropy baselines in two models, LLaDA and Dream.
% As shown in Table \ref{tab:main_results}, SWD consistently enhances accuracy across all models, metrics, and datasets, you can find a more detialed version in Table \ref{tab:app_llada_full} and Table \ref{tab:app_dream_full}.
% SWD consistently enhances generation quality across diverse metrics and benchmarks. The gains are particularly substantial on the Dream-7B model. For instance, on the rigorous MATH500 benchmark, SWD boosts the standard Confidence baseline from 10.8\% to 35.0\%, effectively rectifying the model's tendency to hallucinate under high uncertainty. 
% Similarly, on LLaDA-8B, SWD yields robust improvements (e.g., +8.0\% on HumanEval with Confidence), demonstrating its effectiveness even on stronger base models.
% Crucially, SWD's benefits are not confined to confidence-based decoding. It delivers comparable boosts to Margin and Entropy metrics. This empirically validates our theoretical premise: temporal stability serves as an orthogonal and complementary signal to static plausibility, capable of refining any scoring function without task-specific tuning.
% While SWD achieves superior accuracy in most cases, we observe a performance drop in the Dream-MBPP setting with Confidence and Margin metrics. However, this comes with a drastic reduction in computational cost (e.g., Margin NFE drops from 158.6 to 74.2).

We verify SWD as a universal modulator compatible with arbitrary scoring functions. Table \ref{tab:main_results} presents the performance comparison, with detailed breakdowns provided in Tables \ref{tab:app_llada_full} and \ref{tab:app_dream_full}. SWD consistently enhances generation quality across diverse metrics and benchmarks. 
The gains are particularly substantial on Dream-7B, where SWD boosts the Confidence baseline on MATH500 from 10.8\% to 35.0\%, effectively rectifying the model's tendency to hallucinate under high uncertainty. Robust improvements are also observed on the stronger LLaDA-8B model (e.g., +8.0\% on HumanEval), confirming that SWD scales effectively with model capability. Crucially, these benefits extend to Margin and Entropy metrics, empirically validating that temporal stability serves as an orthogonal signal capable of refining any static plausibility score. 
While SWD achieves superior accuracy in most cases, we observe a slight performance drop in the Dream-MBPP setting. However, this is offset by a drastic reduction in computational cost, with Margin NFE dropping from 202.6 to 69.9. 
By recalibrating $\gamma$ from 0.1 to 0.005, SWD improves to 26.6\% Acc and 98.6 NFE, exceeds the baseline performance.

\subsection{Comparision with History-Aware Decoding}
We compare our method(we use confidence as score combined with SWD, EB-Sampler for selection) with other history-aware decoding method, KLASS\cite{klass} and CreditDecoding\cite{creditdecoding}. KLASS uses confidence as score with a KL-threshold; CreditDecoding uses history-augmented scores.
Besides, both method are score-based method, so SWD can also further modulate them. We use LLaDA for experiment and strictly followed the optimal hyperparameters reported in the original papers: for CreditDecoding, we used $\{\alpha=0.65, \beta=0.7, \gamma=0.65, \tau=0.9\}$; for KLASS, we used $\{\tau_{KL}=0.001, \tau_{Conf}=0.9\}$. We also did internal ablation sweeps to verify these settings. The result is provided in Table \ref{tab:history_aware_compact}.
combined with SWD significantly boosting its performance. 
CreditDecoding shows marginal gains in Block 32 due to information redundancy (both creditdecoding and SWD use stability signals, more detailed discussion is in Appendix B.4), but SWD still provides improvements in Block Full scenarios.

\begin{table}[htbp]
\centering
\caption{Comparison with History-Aware Baselines. Our method achieve best performance and SWD consistently improves average accuracy (Avg) across different decoding settings.}
\label{tab:history_aware_compact}
\vskip 0.1in
\begin{small}
\setlength{\tabcolsep}{3.5pt} % 稍微收紧列间距
\begin{tabular}{lcccccc}
\toprule
\textbf{Strategy} & \textbf{$\lambda$} & \textbf{HEval} & \textbf{MBPP} & \textbf{GSM8K} & \textbf{MATH} & \textbf{Avg} \\
\midrule
\multicolumn{7}{l}{\textit{Full Block}} \\
\midrule
Ours & 5.0 & 36.00 & 34.80 & 67.30 & 36.00 & 43.53 \\
\cmidrule(lr){1-7}
KLASS & 0.0 & 28.05 & 24.00 & 58.83 & 18.40 & 32.32 \\
\quad + SWD & 5.0 & 31.70 & 29.00 & 59.96 & 25.60 & 36.57 \\
\cmidrule(lr){1-7}
CreditDecoding & 0.0 & 29.27 & 24.20 & 59.44 & 18.20 & 32.78 \\
\quad + SWD & 5.0 & 33.54 & 30.40 & 60.35 & 26.80 & 37.77 \\
\midrule
\multicolumn{7}{l}{\textit{Block 32}} \\
\midrule
Ours & 1.0 & 42.70 & 38.80 & 82.90 & 39.00 & 50.85 \\
\cmidrule(lr){1-7}
KLASS & 0.0 & 39.63 & 36.80 & 79.15 & 39.60 & 48.80 \\
\quad + SWD & 1.0 & 41.46 & 35.20 & 79.15 & 39.80 & 48.90 \\
\cmidrule(lr){1-7}
CreditDecoding & 0.0 & 38.41 & 39.20 & 79.08 & 40.40 & 49.27 \\
\quad + SWD & 1.0 & 39.63 & 37.60 & 78.62 & 39.20 & 48.76 \\
\bottomrule
\end{tabular}
\end{small}
\end{table}

% \subsection{Robustness across Acceleration Regimes}
% \label{sec:efficiency}

% A critical challenge in parallel decoding is the "speed-accuracy trade-off": aggressive acceleration (lower NFE) typically degrades generation quality. We evaluate SWD's robustness under these aggressive settings by varying the uncertainty budget $\gamma$ in the EB-Sampler.

% Figure \ref{fig:efficiency} illustrates the efficiency curve of two models on the HumanEval and Math500 benchmark. 

% SWD (Blue Line) consistently lies above the Baseline (Red Dashed Line), effectively pushing the Pareto frontier. This implies that for any given computational budget (NFE), SWD yields higher accuracy.

\begin{table}[ht] % 去掉 *，变为单栏
    \centering
    \caption{\textbf{Main Results on Semi-Autoregressive Decoding (LLaDA, Block=32).} 
    Comparison of Accuracy (\textbf{Acc} $\uparrow$) across three scoring metrics. \textbf{Base} denotes the standard EB-Sampler strategy, and \textbf{+ SWD} denotes our proposed method. SWD consistently improves performance on complex reasoning tasks while maintaining a compact decoding footprint.}
    \label{tab:main_results_block32}
    
    \renewcommand{\arraystretch}{1.1} % 稍微调整行高
    \setlength{\tabcolsep}{3pt}       % 调整列间距以适应单栏宽度
    
    \begin{small} % 使用较小字体确保在单栏内不拥挤
        \begin{tabular}{l l c c c c c}
            \toprule
            \textbf{Metric} & \textbf{Method} & \textbf{HEval} & \textbf{MBPP} & \textbf{GSM8K} & \textbf{MATH} & \textbf{Avg} \\
            \midrule
            
            % ================= Confidence Group =================
            \multirow{2}{*}{Confidence} 
              & Base  & 39.0 & 37.8 & \textbf{83.5} & 38.2 & 49.63\\
              & \textbf{+ SWD} & \textbf{42.7} & \textbf{38.8} & 82.9 & \textbf{39.0} & \textbf{50.85}\\
            \midrule
            
            % ================= Margin Group =================
            \multirow{2}{*}{Margin} 
              & Base  & 40.9 & \textbf{38.4} & \textbf{82.3} & 38.6 & 47.55 \\
              & \textbf{+ SWD} & \textbf{41.5} & 37.4 & \textbf{82.3} & \textbf{38.8} & \textbf{50.00} \\
            \midrule

            % ================= Entropy Group =================
            \multirow{2}{*}{Entropy} 
              & Base  & 42.7 & \textbf{38.2} & \textbf{83.5} & 37.0 & 50.35\\
              & \textbf{+ SWD} & \textbf{43.9} & 38.0 & 82.5 & \textbf{39.6} & \textbf{51.00} \\
            
            \bottomrule
        \end{tabular}
    \end{small}
\end{table}

\subsection{Robustness across Acceleration Regimes}
\label{sec:robustness}

A critical challenge in parallel decoding is balancing inference speed with generation quality. To rigorously evaluate the robustness of SWD, we simulate a wide spectrum of decoding speeds by sweeping the uncertainty budget $\gamma$ in the EB-Sampler across the set $\{0.0005, 0.005, 0.05, 0.1, 0.5, 1, 2\}$. 
A larger $\gamma$ increases the tolerance for uncertainty per step, resulting in higher speedup ratios but a greater risk of quality degradation. 

As illustrated in Figure \ref{fig:efficiency}, which compares the standard Confidence baseline against Confidence + SWD, our method consistently yields a superior trade-off profile, with the SWD curve lying strictly above the baseline across all configurations. 
Notably, we observe distinct behaviors between models: while the LLaDA baseline initially exhibits higher accuracy, it suffers from rapid degradation as acceleration scales up, both baseline and SWD enhanced method dropping to approximately 25\% on HumanEval and 20\% on MATH500 under aggressive settings. 
Dream enhanced by SWD demonstrates exceptional robustness, not only overtaking LLaDA's performance but also sustaining high generation quality—maintaining $\sim$30\% accuracy on HumanEval even at significantly higher speedup ratios where LLaDA falters. 
These results confirm that explicitly penalizing temporal instability effectively mitigates the performance collapse associated with aggressive parallel decoding.

\subsection{Generalizatin to Selection Strategies}
\label{sec:Generalization}

To assess the generality of SWD beyond the EB-Sampler selection method, we evaluate it under diverse token selection paradigms.
We consider two selection policies: a Threshold Strategy, which unmasks all tokens whose scores exceed a threshold $\tau$, and a Top-1 Strategy, which selects only the highest-scoring token at each step, using confidence, margin, and entropy-based scores.
As shown in Figure \ref{fig:threshold_topk}, SWD consistently outperforms the corresponding baselines across all benchmarks and scoring metrics except entropy-GSM8K setting (see Tables \ref{tab:app_llada_threshold} and \ref{tab:app_llada_top1} for detailed results). The improvements are especially pronounced on reasoning-intensive tasks; for example, on GSM8K under the Threshold strategy, SWD increases accuracy from approximately 27.9\% to over 66.6\%, and nearly doubles performance under the Top-1 setting. These results indicate that temporal stability provides a robust and universally beneficial signal, independent of the specific selection policy or scoring metric.

We further examine SWD in semi-autoregressive decoding by applying it to LLaDA with a fixed block size of $K=32$. As summarized in Table \ref{tab:main_results_block32}, SWD improves average performance across most configurations, with consistent gains for Confidence and NegEntropy scores, including substantial improvements on challenging benchmarks such as HumanEval (+3.7\%). While Margin-based decoding shows comparable performance and saturated benchmarks like GSM8K exhibit limited variation, the overall trend confirms that penalizing temporal instability helps prevent premature token commitment even under block-wise generation constraints.

\subsection{Ablation Studies}
\label{sec:ablation_studies}
% ================= FIGURE/TABLE FOR LAMBDA =================
\begin{figure}[h]
    \centering
    % Placeholder: Replace with your actual plot
    \includegraphics[width=1.0\linewidth]{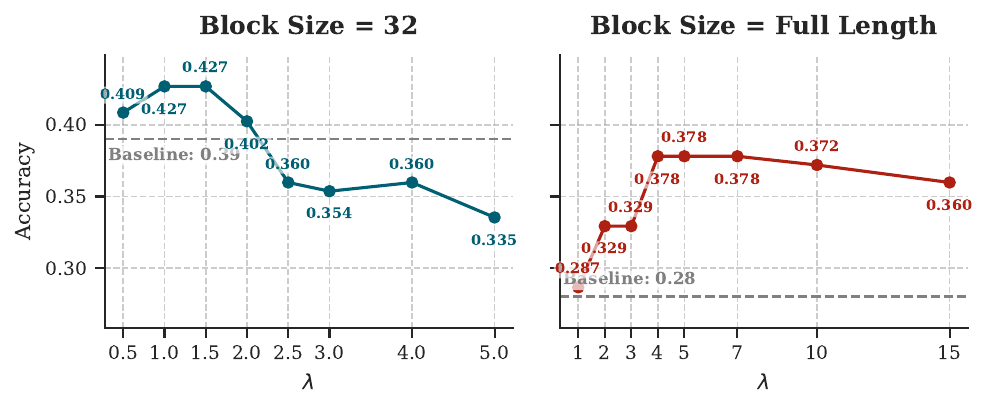}
    \caption{\textbf{Sensitivity to $\lambda$.} The grey dashed line represents the Confidence baseline ($\lambda=0$). SWD outperforms the baseline across a broad range of $\lambda$, demonstrating robustness.}
    \label{fig:lambda_sensitivity}
\end{figure}

Finally, we analyze the sensitivity of the hyperparameter $\lambda$, which balances the trade-off between absolute confidence and temporal stability. 
All experiments are conducted on LLaDA-8B with Confidence scoring across two decoding settings: Block-32 and Full-Sequence (EB-Sampler with $\gamma=0.1$) in HumanEval.
As shown in Figure \ref{fig:lambda_sensitivity}, both decoding settings exhibit an inverted U-shaped trend, indicating that $\lambda$ requires tuning to reach the "sweet spot": a value too small neglects the stability signal, while a value too large over-penalizes valid tokens, causing the model to ignore high-confidence candidates.

Specifically, the Block-32 setting displays a distinct peak around $\lambda \approx 1.5$, where performance maximizes before dropping, as the local block constraint already provides partial stability. In contrast, the Full-Sequence setting benefits from a much broader optimal range, consistently outperforming the baseline. This suggests that in fully parallel decoding, where hallucination risks are higher, a stronger stability enforcement is universally beneficial.

\begin{table}[htbp]
\centering
\caption{Efficiency and Resource Consumption Analysis. $T_m$ and $T_{kl}$ denote model forward time and KL-divergence computation time (in ms), respectively. NFE refers to the number of function evaluations. VRAM is measured in GB.}
\label{tab:efficiency_analysis}
\vskip 0.1in
\begin{small}
\setlength{\tabcolsep}{1pt} % 收紧列间距以适应单栏
\begin{tabular}{llcccccccc}
\toprule
\textbf{Len} & \textbf{Strategy} & \textbf{Acc} & \textbf{NFE} & \textbf{$T_{m}$} & \textbf{$T_{kl}$} & \textbf{$T_{total}$} & \textbf{Avg V} & \textbf{Peak V} \\
\midrule
\multirow{2}{*}{256} & Baseline & 23.04 & 102.90 & 7745.9 & 0.00 & 7745.9 & 17.67 & 23.11 \\
 & \textbf{SWD} & 36.00 & 75.70 & 5976.6 & 193.4 & 6170.1 & 18.35 & 24.24 \\
\midrule
\multirow{2}{*}{512} & Baseline & 15.86 & 149.18 & 16374.0 & 0.00 & 16374.0 & 19.96 & 27.00 \\
 & \textbf{SWD} & 28.04 & 82.78 & 9148.2 & 419.6 & 9567.8 & 21.11 & 28.42 \\
\bottomrule
\end{tabular}
\end{small}
\end{table}

\subsection{Efficiency and Resource Consumption Analysis}
We evaluate the computational efficiency and memory overhead of SWD using LLaDA-8B ($|V| = 126,464$) on HumanEval, with generation lengths set to 256 and 512 ($B=L$). As shown in Table \ref{tab:efficiency_analysis}, while SWD introduces a marginal KL computation overhead ($T_{kl}$), it significantly reduces the total NFE. Since $T_{kl}$ accounts for only 3--4\% of the total latency, the time saved by avoiding redundant model forward passes ($T_m$) far outweighs the local computation cost. 
This confirms that SWD remains highly efficient as sequence length increases, trading minimal local overhead for substantial global speedup.The VRAM overhead of SWD is also practical for 8B-parameter models. For $L=256$, SWD incurs a negligible increase in average and peak VRAM ($\sim$4--5\% over the baseline). This footprint is comparable to other history-aware methods, such as CreditDecoding, which maintains a logit trace matrix $C \in \mathbb{R}^{L \times |V|}$, or KLASS, which also tracks historical KL divergence. 
Given the significant reduction in NFE and end-to-end latency, the minor memory trade-off is well-justified for practical deployment.
\section{Conclusion}
\label{sec:conclusion}

In this paper, we propose Stability-Weighted Decoding (SWD), a training-free and plug-and-play approach for reducing premature commitment in diffusion language model decoding.
We show that temporal instability provides a principled signal of a token’s dependency on masked context and formalize this relationship through the Sensitivity-Dependency Bound.
Without modifying the diffusion process or retraining the model, SWD consistently improves generation quality when integrated with a wide range of score-based decoding strategies.
Experiments on code generation and mathematical reasoning tasks demonstrate that explicitly accounting for temporal stability leads to more reliable parallel decoding across diverse decoding regimes.
We believe that stability-aware decoding offers a simple yet general direction for improving diffusion-based generation, and may extend naturally to multimodal settings or more flexible re-masking schemes.

\section*{Acknowledgements}
This research was conducted using the computing resources provided by the Research Centre for the Mathematical Foundations of Generative AI (PO046811) at The Hong Kong Polytechnic University.

% In this paper, we introduced \textbf{Stability-Weighted Decoding (SWD)}, a training-free, plug-and-play mechanism designed to mitigate premature commitment in the generation of diffusion language models. 
% By identifying temporal instability as a rigorous proxy for latent dependency on the masked context, we theoretically established the Sensitivity-Dependency Bound, providing a mathematical foundation for safe unmasking. 
% Empirically, SWD acts as a universal modulator that seamlessly enhances arbitrary score-based decoding policies (Confidence, Margin, Entropy) without requiring model retraining. 
% Extensive experiments across code generation and mathematical reasoning benchmarks demonstrate that SWD not only consistently improves generation quality but also exhibits exceptional robustness under aggressive acceleration regimes. 
% By effectively suppressing statistically plausible but temporally unstable tokens, SWD pushes the efficiency-accuracy frontier of dLLMs. 
% Future work may explore extending this stability-aware paradigm to multimodal diffusion models or integrating it with iterative re-masking strategies to further close the gap with autoregressive baselines.

% \section*{Impact Statements}
% This paper presents work whose goal is to advance the field of diffusion language model. There are many potential societal consequences of our work, none of which we feel must be specifically highlighted here.

\bibliography{example_paper}
\bibliographystyle{icml2026}

\newpage
\appendix
\onecolumn
\section{Appendix A: Information-Theoretic Foundation of Stability-Weighted Decoding}
\label{app:theoretical_proofs}

In this section, we provide a rigorous information-theoretic justification for SWD. We demonstrate that the observed temporal instability (KL divergence) serves as a strict lower bound on the token's dependency on the remaining masked context. Consequently, minimizing instability is mathematically equivalent to minimizing the risk of latent dependency and potential hallucinations.

\subsection{Preliminaries and Notation}

We model the diffusion generation process as a sequence of refining contexts $\mathbf{x}_T, \dots, \mathbf{x}_t, \dots, \mathbf{x}_0$.
\begin{itemize}
    \item $x_0^i$: The ground-truth token at position $i$.
    \item $\mathbf{x}_{t+1}$: The context at the previous step $t+1$.
    \item $\mathbf{x}_t$: The context at the current step $t$.
    \item $\mathbf{M}_{t+1}$: The set of indices that are masked at step $t+1$. Let $\mathbf{U}_{t+1} = \{x_0^j \mid j \in \mathbf{M}_{t+1}\}$ be the set of ground-truth values for these masked positions (the "Total Unknowns").
    \item $\Delta \mathbf{I}_t$: The incremental information revealed during the transition $\mathbf{x}_{t+1} \to \mathbf{x}_t$. Note that $\mathbf{x}_t$ can be viewed as the union of $\mathbf{x}_{t+1}$ and $\Delta \mathbf{I}_t$.
\end{itemize}

We define the Temporal Instability $\mathcal{D}_{\text{temp}}^{(i)}$ of token $i$ as the Kullback-Leibler (KL) divergence between its predictive distributions at consecutive steps:
\begin{equation}
    \mathcal{D}_{\text{temp}}^{(i)} \triangleq D_{\text{KL}}\Big( p(x_0^i \mid \mathbf{x}_t) \parallel p(x_0^i \mid \mathbf{x}_{t+1}) \Big).
\end{equation}

\subsection{Equivalence of Instability and Information Gain}

First, we establish that the expected temporal instability is exactly the Conditional Mutual Information (CMI) gained from the context update.

\begin{lemma}
The expected KL divergence for token $x_0^i$ during the update $\mathbf{x}_{t+1} \to \mathbf{x}_t$ equals the mutual information between the token and the updated context $\mathbf{x}_t$, conditioned on the previous context $\mathbf{x}_{t+1}$:
\begin{equation}
    \mathbb{E}_{\mathbf{x}_t \sim p(\cdot \mid \mathbf{x}_{t+1})} \left[ D_{\text{KL}}\Big( p(x_0^i \mid \mathbf{x}_t) \parallel p(x_0^i \mid \mathbf{x}_{t+1}) \Big) \right] = I(x_0^i; \mathbf{x}_t \mid \mathbf{x}_{t+1}).
\end{equation}
\end{lemma}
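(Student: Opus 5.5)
The plan is to show that this identity is the standard ``expected KL equals conditional mutual information'' fact, applied to the joint law of $(x_0^i, \mathbf{x}_t, \mathbf{x}_{t+1})$. We treat all predictive distributions as exact conditionals of one underlying joint distribution $p$ over the clean sequence and the corrupted contexts; this is the Bayes-optimal idealization, in which $p_\theta^i(\cdot\mid\mathbf{x}_t)=p(x_0^i\mid\mathbf{x}_t)$. We condition throughout on a fixed realization of $\mathbf{x}_{t+1}$. The argument is then the pointwise identity, averaged over $\mathbf{x}_{t+1}$ at the end if the unconditional form is wanted.

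\textbf{Step 1: the Markov fact.} Since $\mathbf{x}_t$ refines $\mathbf{x}_{t+1}$ ($\mathbf{x}_t = \mathbf{x}_{t+1}\cup\Delta\mathbf{I}_t$), $\mathbf{x}_{t+1}$ is a deterministic function of $\mathbf{x}_t$: remasking the newly revealed positions recovers it. In the absorbing-state reverse process of Eq.~(\ref{eq:reverse_transition}) the unmasked set only grows, so this holds. Consequently $p(x_0^i\mid\mathbf{x}_t,\mathbf{x}_{t+1}) = p(x_0^i\mid\mathbf{x}_t)$. This is the one point where the structure of the diffusion process enters. It is also what I expect to need the most care: one must be clear that $p(\mathbf{x}_t\mid\mathbf{x}_{t+1})$ is the true posterior-predictive law induced by the joint, not an arbitrary sampler. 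Otherwise the identity becomes only an approximation.

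\textbf{Step 2: expand the expectation.} Write
\[
\mathbb{E}_{\mathbf{x}_t\sim p(\cdot\mid\mathbf{x}_{t+1})}\Big[\sum_{v} p(x_0^i{=}v\mid\mathbf{x}_t)\log\frac{p(x_0^i{=}v\mid\mathbf{x}_t)}{p(x_0^i{=}v\mid\mathbf{x}_{t+1})}\Big].
\]
Using Step 1, replace $p(v\mid\mathbf{x}_t)$ by $p(v\mid\mathbf{x}_t,\mathbf{x}_{t+1})$ and combine with $p(\mathbf{x}_t\mid\mathbf{x}_{t+1})$ to get the joint $p(v,\mathbf{x}_t\mid\mathbf{x}_{t+1})$. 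Multiplying numerator and denominator inside the log by $p(\mathbf{x}_t\mid\mathbf{x}_{t+1})$ turns the log ratio into
\[
\log\frac{p(v,\mathbf{x}_t\mid\mathbf{x}_{t+1})}{p(v\mid\mathbf{x}_{t+1})\,p(\mathbf{x}_t\mid\mathbf{x}_{t+1})}.
\]
The resulting sum is by definition $I(x_0^i;\mathbf{x}_t\mid\mathbf{x}_{t+1}=\cdot)$. Averaging over $\mathbf{x}_{t+1}$ gives the stated conditional mutual information.

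\textbf{Step 3: bookkeeping.} Since the vocabulary is finite, all sums are finite. Absolute continuity also holds: $p(v\mid\mathbf{x}_{t+1})=\sum_{\mathbf{x}_t}p(\mathbf{x}_t\mid\mathbf{x}_{t+1})p(v\mid\mathbf{x}_t)$, so $p(v\mid\mathbf{x}_t)>0$ with $p(\mathbf{x}_t\mid\mathbf{x}_{t+1})>0$ forces $p(v\mid\mathbf{x}_{t+1})>0$. Hence the KL is finite and the manipulation is legitimate, using the convention $0\log 0=0$. This mixture identity is the martingale/consistency property, and it again relies on the idealization in Step 1.
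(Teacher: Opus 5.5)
Your proof is the paper's proof, with the paper's implicit step made explicit; the justification you give for that step is the one thing to fix. Steps 2--3 match the paper: expand the expected KL, merge $p(\mathbf{x}_t\mid\mathbf{x}_{t+1})\,p(v\mid\mathbf{x}_t)$ into $p(v,\mathbf{x}_t\mid\mathbf{x}_{t+1})$, rewrite the log ratio, and read off the conditional mutual information. The paper does the merge and the ``Bayes' theorem'' substitution without comment, which silently uses $p(v\mid\mathbf{x}_t)=p(v\mid\mathbf{x}_t,\mathbf{x}_{t+1})$. You are right to isolate this as the one place the diffusion structure enters. Your remarks on the idealized joint law and on pointwise versus averaged conditional mutual information also spell out what the paper leaves implicit.

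Your argument for that step fails for the process you cite. Under the stochastic reverse process of Eq.~(\ref{eq:reverse_transition}), the sequence $\mathbf{x}_t$ reveals $M_t$ but not which of its visible tokens were unmasked at the last step. ``The unmasked set only grows'' gives $M_t\subseteq M_{t+1}$, not that $M_{t+1}$ can be recovered, so $\mathbf{x}_{t+1}$ is not a deterministic function of $\mathbf{x}_t$. If you read $\mathbf{x}_t=\mathbf{x}_{t+1}\cup\Delta\mathbf{I}_t$ literally as a pair, the claim becomes trivial. But then $p(x_0^i\mid\mathbf{x}_t)$ is no longer what the denoiser computes from the sequence alone, and identifying the two needs the same fact.

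All you need is the conditional independence $\mathbf{x}_0\perp\mathbf{x}_{t+1}\mid\mathbf{x}_t$, and it follows from the Markov property of the forward noising chain $\mathbf{x}_0\to\mathbf{x}_t\to\mathbf{x}_{t+1}$. Further masking of $\mathbf{x}_t$ does not depend on $\mathbf{x}_0$, i.e.\ $q(\mathbf{x}_{t+1}\mid\mathbf{x}_t,\mathbf{x}_0)=q(\mathbf{x}_{t+1}\mid\mathbf{x}_t)$. Bayes' rule then gives $p(\mathbf{x}_0\mid\mathbf{x}_t,\mathbf{x}_{t+1})=p(\mathbf{x}_0\mid\mathbf{x}_t)$. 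With this replacement, Steps 2 and 3, including your mixture identity for absolute continuity, go through unchanged.
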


\begin{proof}
Expanding the definition of expected KL divergence:
\begin{align}
    \mathbb{E}[\mathcal{D}_{\text{temp}}^{(i)}] &= \sum_{\mathbf{x}_t} p(\mathbf{x}_t \mid \mathbf{x}_{t+1}) \sum_{v \in V} p(v \mid \mathbf{x}_t) \log \frac{p(v \mid \mathbf{x}_t)}{p(v \mid \mathbf{x}_{t+1})} \\
    &= \sum_{\mathbf{x}_t} \sum_{v} p(v, \mathbf{x}_t \mid \mathbf{x}_{t+1}) \log \frac{p(v \mid \mathbf{x}_t)}{p(v \mid \mathbf{x}_{t+1})}
\end{align}
Using Bayes' theorem, we substitute $p(v \mid \mathbf{x}_t) = \frac{p(v, \mathbf{x}_t \mid \mathbf{x}_{t+1})}{p(\mathbf{x}_t \mid \mathbf{x}_{t+1})}$ into the logarithm term:
\begin{align}
    \log \frac{p(v \mid \mathbf{x}_t)}{p(v \mid \mathbf{x}_{t+1})} &= \log \frac{p(v, \mathbf{x}_t \mid \mathbf{x}_{t+1})}{p(\mathbf{x}_t \mid \mathbf{x}_{t+1}) p(v \mid \mathbf{x}_{t+1})}
\end{align}
Substituting this back into the summation:
\begin{align}
    \mathbb{E}[\mathcal{D}_{\text{temp}}^{(i)}] &= \sum_{\mathbf{x}_t} \sum_{v} p(v, \mathbf{x}_t \mid \mathbf{x}_{t+1}) \log \frac{p(v, \mathbf{x}_t \mid \mathbf{x}_{t+1})}{p(v \mid \mathbf{x}_{t+1})p(\mathbf{x}_t \mid \mathbf{x}_{t+1})} \\
    &\equiv I(x_0^i; \mathbf{x}_t \mid \mathbf{x}_{t+1}).
\end{align}
\end{proof}

\subsection{The Sensitivity-Dependency Bound}

We now relate this information gain to the token's total dependency on the unknown contexts.

\begin{theorem}
The expected temporal instability of token $x_0^i$ is a strict lower bound on its mutual information with the total masked context $\mathbf{U}_{t+1}$. That is, high instability implies high dependency on the remaining unknowns.
\begin{equation}
    I(x_0^i; \mathbf{U}_{t+1} \mid \mathbf{x}_{t+1}) \ge \mathbb{E}\left[ \mathcal{D}_{\text{temp}}^{(i)} \right].
\end{equation}
\end{theorem}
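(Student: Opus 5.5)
The plan is to combine the preceding Lemma with the data processing inequality. By the Lemma, the expected temporal instability equals $I(x_0^i; \mathbf{x}_t \mid \mathbf{x}_{t+1})$, where the expectation is over $\mathbf{x}_t \sim p(\cdot \mid \mathbf{x}_{t+1})$. The whole task therefore reduces to showing
\[
I(x_0^i; \mathbf{x}_t \mid \mathbf{x}_{t+1}) \le I(x_0^i; \mathbf{U}_{t+1} \mid \mathbf{x}_{t+1}).
\]
This is exactly what data processing gives, once we know that, conditionally on $\mathbf{x}_{t+1}$, the new context $\mathbf{x}_t$ is a (possibly randomized) function of $\mathbf{U}_{t+1}$ that does not otherwise depend on $x_0^i$.

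The key structural step is to establish the conditional Markov chain
\[
x_0^i \to \mathbf{U}_{t+1} \to \mathbf{x}_t \quad \text{given } \mathbf{x}_{t+1}.
\]
I would argue it from the absorbing-mask structure of Eq.~(\ref{eq:reverse_transition}) under the true (data-consistent) reverse process:
\begin{itemize}
\item The unmasked positions of $\mathbf{x}_{t+1}$ carry over unchanged.
\item Each newly revealed position $j \in M_{t+1}\setminus M_t$ takes the value $x_0^j$, which is a coordinate of $\mathbf{U}_{t+1}$.
\item The choice of which positions to reveal is randomness independent of $x_0$ given $\mathbf{x}_{t+1}$, at least for the schedule in Eq.~(\ref{eq:reverse_transition}).
\end{itemize}
Hence $\mathbf{x}_t = g(\mathbf{x}_{t+1}, \mathbf{U}_{t+1}, \xi)$ with $\xi$ independent noise, i.e.\ $\mathbf{x}_t = \mathbf{x}_{t+1} \cup \Delta\mathbf{I}_t$ with $\Delta\mathbf{I}_t \subseteq \mathbf{U}_{t+1}$ up to the revealed index set. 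Conditional data processing then yields the inequality.

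For a cleaner derivation, I would instead use the chain rule directly:
\[
I(x_0^i; \mathbf{U}_{t+1}, \mathbf{x}_t \mid \mathbf{x}_{t+1}) = I(x_0^i; \mathbf{U}_{t+1} \mid \mathbf{x}_{t+1}) + I(x_0^i; \mathbf{x}_t \mid \mathbf{U}_{t+1}, \mathbf{x}_{t+1}),
\]
and the last term vanishes by the Markov property. Expanding the same joint quantity the other way gives $I(x_0^i; \mathbf{x}_t \mid \mathbf{x}_{t+1}) + I(x_0^i; \mathbf{U}_{t+1} \mid \mathbf{x}_t, \mathbf{x}_{t+1})$. Comparing the two expansions gives
\[
I(x_0^i; \mathbf{U}_{t+1} \mid \mathbf{x}_{t+1}) = \mathbb{E}[\mathcal{D}^{(i)}_{\text{temp}}] + I(x_0^i; \mathbf{U}_{t+1} \mid \mathbf{x}_t).
\]
Nonnegativity of the residual term finishes the proof, and it also identifies the gap as the information still shared with the masked context after the update.

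The main obstacle is the Markov step. It fails if the unmasking set is chosen adaptively from model scores (as in confidence-based decoding), because then the choice depends on the model rather than being independent noise. It also requires that $p(\cdot\mid\cdot)$ be the true posterior, not $p_\theta$. I would handle this by stating the theorem for the idealized process, with the revealed index set drawn independently of $x_0$ given $\mathbf{x}_{t+1}$, and with $p_\theta$ equal to the true conditional. The word ``strict'' should then be read as ``$\ge$'', with equality exactly when $I(x_0^i;\mathbf{U}_{t+1}\mid\mathbf{x}_t)=0$.
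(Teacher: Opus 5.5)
Your proof is correct and is essentially the paper's argument: both invoke the Lemma to identify $\mathbb{E}[\mathcal{D}_{\text{temp}}^{(i)}]$ with $I(x_0^i;\mathbf{x}_t\mid\mathbf{x}_{t+1})$, then use the chain rule to write $I(x_0^i;\mathbf{U}_{t+1}\mid\mathbf{x}_{t+1})$ as that term plus the nonnegative residual $I(x_0^i;\mathbf{U}_t\mid\mathbf{x}_t,\mathbf{x}_{t+1})$, which equals your $I(x_0^i;\mathbf{U}_{t+1}\mid\mathbf{x}_t,\mathbf{x}_{t+1})$. The one difference is in justifying the first equality: the paper informally identifies $\mathbf{U}_{t+1}$ with $(\mathbf{x}_t,\mathbf{U}_t)$, whereas your two-way expansion with $I(x_0^i;\mathbf{x}_t\mid\mathbf{U}_{t+1},\mathbf{x}_{t+1})=0$ makes explicit the hypotheses that identification silently needs (since $\mathbf{x}_t$ also records \emph{which} positions were revealed, the reveal set must be independent of $x_0$ given $\mathbf{x}_{t+1}$, and $p$ must be the true posterior).
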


\begin{proof}
The total set of unknowns at step $t+1$, denoted $\mathbf{U}_{t+1}$, consists of the information revealed in the current step ($\mathbf{x}_t$) and the information remaining masked ($\mathbf{U}_t$).
\begin{equation}
    \text{Information Content}(\mathbf{U}_{t+1}) \equiv \text{Information Content}(\mathbf{x}_t, \mathbf{U}_t)
\end{equation}
By the Chain Rule of Mutual Information, we decompose the total dependency:
\begin{align}
    I(x_0^i; \mathbf{U}_{t+1} \mid \mathbf{x}_{t+1}) &= I(x_0^i; \mathbf{x}_t, \mathbf{U}_t \mid \mathbf{x}_{t+1}) \\
    &= \underbrace{I(x_0^i; \mathbf{x}_t \mid \mathbf{x}_{t+1})}_{\text{Information Gain (Lemma 1)}} + \underbrace{I(x_0^i; \mathbf{U}_t \mid \mathbf{x}_t, \mathbf{x}_{t+1})}_{\text{Residual Dependency}}
\end{align}
From Lemma 1, the first term is exactly $\mathbb{E}[\mathcal{D}_{\text{temp}}^{(i)}]$.
By the non-negativity of mutual information, the second term is non-negative:
\begin{equation}
    I(x_0^i; \mathbf{U}_t \mid \mathbf{x}_t) \ge 0.
\end{equation}
Therefore, we obtain the lower bound:
\begin{equation}
    I(x_0^i; \mathbf{U}_{t+1} \mid \mathbf{x}_{t+1}) \ge I(x_0^i; \mathbf{x}_t \mid \mathbf{x}_{t+1}) = \mathbb{E}\left[ \mathcal{D}_{\text{temp}}^{(i)} \right].
\end{equation}
\end{proof}

% \subsection{Implication: Instability as Evidence of Entanglement}

% The theorem above has profound implications for parallel decoding:
% \begin{enumerate}
%     \item \textbf{Existence of Dependency:} A non-zero KL divergence ($\mathcal{D}_{\text{temp}}^{(i)} \gg 0$) mathematically proves that the total dependency $I(x_0^i; \mathbf{U}_{t+1})$ is non-zero. The token is "entangled" with the masked context.
%     \item \textbf{Risk of Premature Unmasking:} If $\mathcal{D}_{\text{temp}}^{(i)}$ is large, the token is in an active state of information acquisition. Assuming the homogeneity of semantic dependencies in natural language, a token sensitive to the partial update $\mathbf{x}_t$ is statistically likely to remain sensitive to the residual masks $\mathbf{U}_t$.
%     \item \textbf{The Safety Condition:} Ideally, we only unmask a token when it is independent of future context, i.e., $I(x_0^i; \mathbf{U}_{t+1}) \approx 0$. By the contrapositive of Theorem 1, minimizing $\mathcal{D}_{\text{temp}}^{(i)}$ is a necessary condition for independence.
% \end{enumerate}

\subsection{Formulation of Stability-Weighted Decoding}

Based on Theorem 1, we formulate the parallel decoding problem as maximizing confidence subject to a constraint on the total risk (dependency). Let $S$ be the set of tokens selected for unmasking.
\begin{equation}
    \max_{S} \sum_{i \in S} \log p(x_0^i \mid \mathbf{x}_t) \quad \text{s.t.} \quad \sum_{i \in S} \mathcal{D}_{\text{temp}}^{(i)} \le \lambda
\end{equation}
where $\lambda$ is a risk budget representing the maximum tolerable dependency. Using the method of Lagrange multipliers, we relax this into the unconstrained objective utilized in SWD:
\begin{equation}
    \mathcal{L}(S) = \sum_{i \in S} \left( \log p(x_0^i \mid \mathbf{x}_t) - \lambda \cdot \mathcal{D}_{\text{temp}}^{(i)} \right).
\end{equation}
Exponentiating the per-token term yields our proposed scoring function:
\begin{equation}
    \text{SWD}(x_i) = p(x_i \mid \mathbf{x}_t) \cdot \exp\left(-\lambda \cdot \mathcal{D}_{\text{KL}}(p_t^i \parallel p_{t+1}^i)\right).
\end{equation}
This derivation confirms that SWD optimally selects tokens that are both high-confidence and theoretically decoupled from the latent masked context.

\section{Appendix B: Detailed Experimental Results}
\label{app:detailed_results}

In this section, we provide the complete experimental data supporting the main claims of the paper. We present granular performance metrics (Accuracy and NFE) across all models (LLaDA-8B, Dream-7B), decoding settings (Full-Sequence, Block-32), and selection strategies (EB-Sampler, Static, Threshold).

\subsection{LLaDA-8B Comprehensive Results}
\label{app:llada_results}

Table \ref{tab:app_llada_full} presents the extended results for LLaDA-8B under the Full-Sequence setting. We include a wider range of penalty strengths ($\lambda$) and compare against the Static decoding baseline.
Table \ref{tab:app_llada_threshold} details the study on the Threshold-based selection strategy, demonstrating SWD's robustness across different settings. We changed the threshold $\tau$ to make sure NEF are comparable, as SWD changes the value of scores.
Table \ref{tab:app_llada_top1} provides specific results for the Top-1 selection strategy. 
Table \ref{tab:app_llada_block32_detail} provides the granular breakdown of results for LLaDA-8B under the Semi-Autoregressive (Block-32) setting. 
We compare the Static strategy (fixed block generation), the standard EB-Sampler (Base), and SWD-enhanced versions with varying $\lambda$. The data confirms that SWD ($\lambda=1.0$ or $1.5$) consistently boosts performance over the adaptive baseline, particularly in reasoning tasks like HumanEval and MATH500, while maintaining comparable inference costs.

% ================= LLaDA FULL SEQUENCE TABLE =================
\begin{table}[h]
    \centering
    \caption{\textbf{LLaDA-8B Full-Sequence Comprehensive Results.} Performance comparison of different scoring targets (Confidence, Margin, NegEntropy) and strategies (Static, EB-Sampler) with expanded penalty strengths ($\lambda$). \textbf{Acc} is Accuracy (\%) and \textbf{NFE} is Average Number of Function Evaluations. Best accuracy within each target block is bolded.}
    \label{tab:app_llada_full}
    
    \renewcommand{\arraystretch}{1.2}
    \setlength{\tabcolsep}{3.5pt}
    
    \resizebox{\textwidth}{!}{
        \begin{tabular}{l l c c c c c c c c}
            \toprule
            \multirow{2}{*}{\textbf{Target}} & \multirow{2}{*}{\textbf{Strategy}} & \multicolumn{2}{c}{\textbf{HumanEval}} & \multicolumn{2}{c}{\textbf{MBPP}} & \multicolumn{2}{c}{\textbf{GSM8K}} & \multicolumn{2}{c}{\textbf{MATH500}} \\
            \cmidrule(lr){3-4} \cmidrule(lr){5-6} \cmidrule(lr){7-8} \cmidrule(lr){9-10}
             & & \textbf{Acc} $\uparrow$ & \textbf{NFE} $\downarrow$ & \textbf{Acc} $\uparrow$ & \textbf{NFE} $\downarrow$ & \textbf{Acc} $\uparrow$ & \textbf{NFE} $\downarrow$ & \textbf{Acc} $\uparrow$ & \textbf{NFE} $\downarrow$ \\
            \midrule
            
            % ================= Confidence =================
            \multirow{4}{*}{Confidence} 
             & EB (Baseline)   & 28.0 & 102.9 & 23.4 & 110.3 & 31.5 & 224.1 & 19.6 & 226.6 \\
             & + SWD ($\lambda=1.5$) & 32.9 & 91.2  & 27.6 & 100.5 & 45.3 & 195.4 & 21.2 & 213.4 \\
             & + SWD ($\lambda=5.0$) & \textbf{36.0} & 77.2 & \textbf{34.8} & 66.8 & \textbf{67.3} & 163.2 & \textbf{26.2} & 205.7\\
            \midrule
            
            % ================= Margin =================
            \multirow{5}{*}{Margin} 
             & EB (Baseline)   & 23.2 & 100.8 & 26.0 & 108.3 & 38.7 & 215.8 & 21.8 & 221.0 \\
             & + SWD ($\lambda=1.0$) & 23.8 & 92.8  & 27.8 & 103.5 & 39.5 & 203.1 & 20.4 & 214.0 \\
             & + SWD ($\lambda=1.5$) & 23.8 & 88.2  & 28.0 & 99.5  & 42.5 & 195.9 & 21.8 & 209.7 \\
             & + SWD ($\lambda=5.0$) & \textbf{29.9} & 66.4  & \textbf{34.0} & 65.9  & \textbf{64.4} & 166.9 & \textbf{25.8} & 198.4 \\
            \midrule
            
            % ================= Entropy =================
            \multirow{5}{*}{Entropy} 
             & EB (Baseline)   & 28.7 & 108.9 & 18.4 & 114.1 & 26.3 & 226.6 & 19.6 & 230.7 \\
             & + SWD ($\lambda=1.0$) & 28.0 & 106.7 & 21.4 & 112.6 & 27.9 & 221.1 & 19.2 & 230.2 \\
             & + SWD ($\lambda=1.5$) & 29.3 & 106.1 & 22.6 & 111.7 & 29.5 & 216.8 & 19.8 & 229.3 \\
             & + SWD ($\lambda=5.0$) & \textbf{32.3} & 103.5 & \textbf{26.6} & 108.0 & \textbf{48.7} & 191.8 & \textbf{20.4} & 224.3 \\
            % \multirow{1}{*}{Pure KL}
            %  & Only KL & 28.0 & 102.9 & 23.4 & 110.3 & 31.4 & 224.1 & 19.6 & 226.6\\
            \bottomrule
        \end{tabular}
    }
\end{table}

% ================= LLaDA THRESHOLD TABLE =================
\begin{table}[h]
    \centering
    \caption{\textbf{LLaDA on Threshold Strategy.} Performance with varying confidence thresholds ($\tau$) and penalty strengths ($\lambda$). Bold indicates the best result (Highest Acc, Lowest NFE) within each metric group.}
    \label{tab:app_llada_threshold}
    \resizebox{\textwidth}{!}{%
        \begin{tabular}{llcccccccc}
        \toprule
        \multirow{2}{*}{\textbf{Metric}} & \multirow{2}{*}{\textbf{Params}} & \multicolumn{2}{c}{\textbf{HumanEval}} & \multicolumn{2}{c}{\textbf{MBPP}} & \multicolumn{2}{c}{\textbf{GSM8K}} & \multicolumn{2}{c}{\textbf{MATH500}} \\
        \cmidrule(lr){3-4} \cmidrule(lr){5-6} \cmidrule(lr){7-8} \cmidrule(lr){9-10}
         & & \textbf{Acc} & \textbf{NFE} & \textbf{Acc} & \textbf{NFE} & \textbf{Acc} & \textbf{NFE} & \textbf{Acc} & \textbf{NFE} \\
        \midrule
        \multirow{3}{*}{Confidence} 
         & $\tau=0.9, \lambda=0$ & 29.27 & 66.16 & 24.20 & 52.45 & 27.90 & 150.40 & 18.20 & 181.79 \\
         & $\tau=0.8, \lambda=1$ & 31.10 & 61.96 & 26.60 & 42.17 & 38.06 & 131.25 & 18.40 & 172.48 \\
         & $\tau=0.8, \lambda=5$ & \textbf{35.37} & 56.12 & \textbf{34.20} & 34.44 & \textbf{66.64} & 101.55 & \textbf{25.80} & 149.25 \\
        \midrule
        \multirow{3}{*}{Margin} 
         & $\tau=0.9, \lambda=0$ & 23.17 & 65.77 & 28.60 & 60.62 & 36.24 & 167.82 & 20.20 & 195.64 \\
         & $\tau=0.8, \lambda=1$ & 24.39 & 80.59 & 28.60 & 76.05 & 39.35 & 205.15 & 20.60 & 240.16 \\
         & $\tau=0.8, \lambda=5$ & \textbf{33.54} & 53.99 & \textbf{32.20} & 42.26 & \textbf{58.45} & 139.56 & \textbf{24.80} & 176.85 \\
        \midrule
        \multirow{3}{*}{Neg\_Entropy} 
         & $\tau=-0.4, \lambda=0$ & 27.44 & 85.01 & 19.20 & 71.70 & \textbf{59.14} & 90.49 & 17.60 & 199.78 \\
         & $\tau=-0.6, \lambda=1$ & \textbf{30.49} & 79.84 & 22.00 & 58.12 & 57.99 & 88.88 & 17.60 & 194.32 \\
         & $\tau=-0.7, \lambda=5$ & 29.88 & 77.39 & \textbf{26.00} & 59.83 & 54.89 & 81.81 & \textbf{20.20} & 183.36 \\
        \bottomrule
        \end{tabular}%
    }
\end{table}

% ================= LLaDA STATIC TABLE =================
% ================= TABLE B.3: LLaDA TOP-1 (Renamed from Static) =================
\begin{table*}[t]
    \centering
    \caption{\textbf{LLaDA Top-1 Strategy Analysis.} Performance comparison using the \textbf{Top-1 selection strategy} (unmasking only the single highest-scoring token per step). We compare the baseline ($\lambda=0$) against SWD ($\lambda=5$). Note that Top-1 decoding results in fixed NFEs equal to the sequence length (256 or 512).}
    \label{tab:app_llada_top1}
    
    \renewcommand{\arraystretch}{1.2}
    \setlength{\tabcolsep}{6pt}
    
    \resizebox{\textwidth}{!}{%
        \begin{tabular}{llcccccccc}
        \toprule
        \multirow{2}{*}{\textbf{Metric}} & \multirow{2}{*}{\textbf{Method}} & \multicolumn{2}{c}{\textbf{HumanEval}} & \multicolumn{2}{c}{\textbf{MBPP}} & \multicolumn{2}{c}{\textbf{GSM8K}} & \multicolumn{2}{c}{\textbf{MATH500}} \\
        \cmidrule(lr){3-4} \cmidrule(lr){5-6} \cmidrule(lr){7-8} \cmidrule(lr){9-10}
         & & \textbf{Acc} & \textbf{NFE} & \textbf{Acc} & \textbf{NFE} & \textbf{Acc} & \textbf{NFE} & \textbf{Acc} & \textbf{NFE} \\
        \midrule
        \multirow{2}{*}{Confidence} 
         & Top-1 (Base) & 26.2 & 256.0 & 23.8 & 256.0 & 31.7 & 512.0 & 20.4 & 512.0 \\
         & Top-1 + SWD ($\lambda=5$) & \textbf{30.5} & 256.0 & \textbf{29.8} & 256.0 & \textbf{60.7} & 256.0 & \textbf{25.4} & 512.0 \\
        \midrule
        \multirow{2}{*}{Margin}
         & Top-1 (Base) & 22.0 & 256.0 & 27.8 & 256.0 & 37.3 & 512.0 & 21.4 & 512.0 \\
         & Top-1 + SWD ($\lambda=5$) & \textbf{31.7} & 256.0 & \textbf{30.2} & 256.0 & \textbf{59.3} & 256.0 & \textbf{25.4} & 512.0 \\
        \midrule
        \multirow{2}{*}{Entropy}
         & Top-1 (Base) & 26.2 & 256.0 & 18.6 & 256.0 & 24.3 & 512.0 & 18.4 & 512.0 \\
         & Top-1 + SWD ($\lambda=5$) & \textbf{31.7} & 256.0 & \textbf{25.0} & 256.0 & \textbf{56.1} & 256.0 & \textbf{21.8} & 512.0 \\
        \bottomrule
        \end{tabular}%
    }
\end{table*}

% ================= TABLE B.X: LLaDA BLOCK-32 DETAILED =================
\begin{table*}[t]
    \centering
    \caption{\textbf{LLaDA-8B Block-32 Detailed Results.} Comprehensive performance comparison under the semi-autoregressive setting (Block Size = 32). We compare Static decoding, the standard EB-Sampler baseline, and SWD with various penalty strengths ($\lambda$). Bold indicates the best accuracy within each metric group.}
    \label{tab:app_llada_block32_detail}
    
    \renewcommand{\arraystretch}{1.2}
    \setlength{\tabcolsep}{6pt}
    
    \resizebox{\textwidth}{!}{
        \begin{tabular}{l l c c c c c c c c}
            \toprule
            \multirow{2}{*}{\textbf{Metric}} & \multirow{2}{*}{\textbf{Method}} & \multicolumn{2}{c}{\textbf{HumanEval}} & \multicolumn{2}{c}{\textbf{MBPP}} & \multicolumn{2}{c}{\textbf{GSM8K}} & \multicolumn{2}{c}{\textbf{MATH500}} \\
            \cmidrule(lr){3-4} \cmidrule(lr){5-6} \cmidrule(lr){7-8} \cmidrule(lr){9-10}
             & & \textbf{Acc} $\uparrow$ & \textbf{NFE} $\downarrow$ & \textbf{Acc} $\uparrow$ & \textbf{NFE} $\downarrow$ & \textbf{Acc} $\uparrow$ & \textbf{NFE} $\downarrow$ & \textbf{Acc} $\uparrow$ & \textbf{NFE} $\downarrow$ \\
            \midrule
            
            % ================= Confidence =================
            \multirow{4}{*}{Confidence} 
             & Static (Block)   & 39.6 & 256.0 & 38.6 & 256.0 & 82.9 & 512.0 & \textbf{40.4} & 512.0 \\
             & Base (EB)        & 39.0 & 76.6  & 37.8 & 55.9  & \textbf{83.5} & 116.0 & 38.2 & 173.2 \\
             & + SWD ($\lambda=1.0$) & \textbf{42.7} & 76.3  & \textbf{38.8} & 55.1  & 82.9 & 110.6 & 39.0 & 163.3 \\
             & + SWD ($\lambda=1.5$) & \textbf{42.7} & 73.0  & 38.6 & 54.3  & 82.4 & 109.6 & 38.2 & 162.4 \\
            \midrule
            
            % ================= Margin =================
            \multirow{4}{*}{Margin} 
             & Static (Block)   & 40.9 & 256.0 & 37.8 & 256.0 & 82.2 & 512.0 & 38.6 & 512.0 \\
             & Base (EB)        & 40.9 & 74.7  & \textbf{38.4} & 54.2  & 82.3 & 111.2 & 38.6 & 164.4 \\
             & + SWD ($\lambda=1.0$) & \textbf{41.5} & 74.6  & 37.4 & 52.3  & 82.3 & 108.3 & \textbf{38.8} & 159.4 \\
             & + SWD ($\lambda=1.5$) & 40.2 & 72.3  & 38.2 & 51.6  & \textbf{83.2} & 107.2 & 36.8 & 157.6 \\
            \midrule
            
            % ================= NegEntropy =================
            \multirow{5}{*}{NegEntropy} 
             & Static (Block)   & 41.5 & 256.0 & 37.2 & 256.0 & 83.3 & 512.0 & \textbf{40.4} & 512.0 \\
             & Base (EB)        & 42.7 & 80.4  & 38.2 & 58.5  & \textbf{83.5} & 123.5 & 37.0 & 180.6 \\
             & + SWD ($\lambda=0.5$) & \textbf{43.9} & 83.5  & 38.0 & 57.9  & 82.5 & 121.7 & 39.6 & 179.7 \\
             & + SWD ($\lambda=1.0$) & 39.6 & 86.8  & \textbf{38.4} & 59.5  & 82.9 & 119.8 & 37.0 & 176.9 \\
             & + SWD ($\lambda=1.5$) & 38.4 & 87.5  & 35.8 & 60.9  & 82.7 & 118.7 & 39.4 & 174.8 \\
            \bottomrule
        \end{tabular}
    }
\end{table*}

\clearpage

\subsection{Dream-7B Comprehensive Results}
\label{app:dream_results}

Table \ref{tab:app_dream_full} details the results for Dream-7B under the Full-Sequence setting. We observe significant improvements with higher $\lambda$ values (e.g., $\lambda=30$), particularly for the Entropy metric.
Table \ref{tab:app_dream_block32} presents the results for Dream-7B under the Block=32 setting.

% ================= DREAM FULL SEQUENCE TABLE =================
\begin{table*}[h]
    \centering
    \caption{\textbf{Dream-7B Full-Sequence Comprehensive Results.} Comparison of baselines and SWD with various penalty strengths. Note the effectiveness of stronger penalties (e.g., $\lambda=15, 30$) for this model. Bold indicates the best result in each group.}
    \label{tab:app_dream_full}
    \resizebox{\textwidth}{!}{%
        \begin{tabular}{llcccccccc}
        \toprule
        \multirow{2}{*}{\textbf{Metric}} & \multirow{2}{*}{\textbf{Strategy}} & \multicolumn{2}{c}{\textbf{HumanEval}} & \multicolumn{2}{c}{\textbf{MBPP}} & \multicolumn{2}{c}{\textbf{GSM8K}} & \multicolumn{2}{c}{\textbf{MATH500}} \\
        \cmidrule(lr){3-4} \cmidrule(lr){5-6} \cmidrule(lr){7-8} \cmidrule(lr){9-10}
         &  & \textbf{Acc} & \textbf{NFE} & \textbf{Acc} & \textbf{NFE} & \textbf{Acc} & \textbf{NFE} & \textbf{Acc} & \textbf{NFE} \\
        \midrule
        \multirow{6}{*}{Confidence} 
         & EB ($\lambda=0$)    & 12.20 & 179.65 & 29.40 & 212.37 & 29.34 & 286.60 & 10.80 & 364.08 \\
         & + SWD ($\lambda=1.0$)  & 14.02 & 160.84 & 39.40 & 154.71 & 29.04 & 249.50 & 12.20 & 305.93 \\
         & + SWD ($\lambda=1.5$)  & 14.63 & 154.23 & 41.40 & 152.61 & 31.24 & 224.87 & 13.80 & 277.19 \\
         & + SWD ($\lambda=5.0$)  & 17.07 & 73.56 & \textbf{44.40} & 87.57 & 52.69 & 123.84 & 25.80 & 115.06 \\
         & + SWD ($\lambda=15.0$) & 22.56 & 67.73 & 26.60 & 86.81 & 59.29 & 116.94 & 34.40 & 136.58 \\
         & + SWD ($\lambda=30.0$) & \textbf{24.39} & 67.41 & 26.20 & 86.24 & \textbf{61.64} & 119.26 & \textbf{35.00} & 151.64 \\
        \midrule
        \multirow{3}{*}{Margin} 
         & EB ($\lambda=0$)    & 10.37 & 173.40 & \textbf{23.40} & 202.60 & 29.49 & 275.62 & 11.40 & 354.94 \\
         & + SWD ($\lambda=15$)   & 13.41 & 60.87 & 14.00 & 74.24 & 58.07 & 105.94 & \textbf{34.40} & 109.84 \\
         & + SWD ($\lambda=30$)   & \textbf{14.63} & 63.93 & 17.60 & 69.93 & \textbf{61.03} & 111.53 & 33.80 & 135.41 \\
        \midrule
        \multirow{3}{*}{Neg\_Entropy} 
         & EB ($\lambda=0$)    & 21.34 & 188.44 & 33.20 & 174.73 & 28.36 & 292.32 & 8.40 & 382.38 \\
         & + SWD ($\lambda=15$)   & 28.05 & 154.53 & \textbf{43.20} & 204.61 & 56.03 & 155.55 & 29.80 & 214.48 \\
         & + SWD ($\lambda=30$)   & \textbf{40.85} & 84.93 & 42.60 & 109.39 & \textbf{64.97} & 132.09 & \textbf{38.60} & 181.55 \\
        \bottomrule
        \end{tabular}%
    }
\end{table*}

% ================= DREAM BLOCK 32 TABLE =================
\begin{table*}[h]
    \centering
    \caption{\textbf{Dream-7B Block-32 Results.} Performance comparison with a local block constraint ($K=32$). Bold indicates the best result in each group.}
    \label{tab:app_dream_block32}
    \resizebox{\textwidth}{!}{%
        \begin{tabular}{llcccccccc}
        \toprule
        \multirow{2}{*}{\textbf{Metric}} & \multirow{2}{*}{\textbf{Strategy}} & \multicolumn{2}{c}{\textbf{HumanEval}} & \multicolumn{2}{c}{\textbf{MBPP}} & \multicolumn{2}{c}{\textbf{GSM8K}} & \multicolumn{2}{c}{\textbf{MATH500}} \\
        \cmidrule(lr){3-4} \cmidrule(lr){5-6} \cmidrule(lr){7-8} \cmidrule(lr){9-10}
         &  & \textbf{Acc} & \textbf{NFE} & \textbf{Acc} & \textbf{NFE} & \textbf{Acc} & \textbf{NFE} & \textbf{Acc} & \textbf{NFE} \\
        \midrule
        \multirow{5}{*}{Confidence} 
         & EB (baseline)    & \textbf{50.00} & 55.86 & 46.20 & 53.16 & \textbf{76.12} & 138.58 & \textbf{38.00} & 170.98 \\
         & + SWD ($\lambda=0.5$)   & 48.78 & 53.59 & \textbf{47.00} & 51.98 & 72.18 & 106.28 & 35.40 & 167.58 \\
         & + SWD ($\lambda=1.0$)   & 44.51 & 54.10 & 45.00 & 49.36 & 75.89 & 132.33 & 35.40 & 164.39 \\
         & + SWD ($\lambda=1.5$)   & 42.68 & 52.49 & 44.60 & 48.34 & 75.89 & 130.41 & 33.80 & 166.15 \\
         & + SWD ($\lambda=5.0$)   & 36.59 & 51.27 & 41.60 & 42.00 & 73.62 & 131.55 & 30.40 & 164.28 \\
        \midrule
        \multirow{3}{*}{Margin} 
         & EB ($\lambda=0$)    & \textbf{49.39} & 57.49 & 42.20 & 50.30 & \textbf{75.21} & 135.67 & 37.00 & 169.82 \\
         & + SWD ($\lambda=0.5$)   & 48.78 & 56.29 & \textbf{45.00} & 51.59 & 72.71 & 105.02 & \textbf{37.80} & 166.15 \\
         & + SWD ($\lambda=1.0$)   & 48.17 & 55.30 & 43.80 & 49.94 & 71.27 & 103.91 & 36.00 & 162.65 \\
        \midrule
        \multirow{3}{*}{Neg\_Entropy} 
         & EB ($\lambda=0$)    & 50.61 & 57.22 & 46.40 & 53.36 & 73.62 & 112.11 & 40.00 & 176.26 \\
         & + SWD ($\lambda=0.5$)   & 50.00 & 56.75 & \textbf{47.20} & \textbf{52.87} & 73.39 & 110.80 & \textbf{41.20} & 172.89 \\
         & + SWD ($\lambda=1.0$)   & \textbf{52.44} & 55.98 & \textbf{47.20} & 54.75 & \textbf{74.15} & 109.66 & 40.80 & 170.47 \\
        \bottomrule
        \end{tabular}%
    }
\end{table*}

\subsection{Code Impletation}
Key code are illustrated in Figure \ref{fig:code_comparison}. SWD only need slight change to the original code.

\begin{figure*}[t]
    \centering
    % 左栏：Baseline
    % 稍微调整宽度比例，左边代码短，可以窄一点
    \begin{minipage}[t]{0.40\textwidth} 
        \textbf{(a) Standard Confidence Top-k Decoding}
        % 【修改点】在这里添加 basicstyle=\scriptsize\ttfamily
        % 如果觉得太小，可以改成 \footnotesize
        \begin{lstlisting}[style=ebstyle, basicstyle=\scriptsize\ttfamily]
def step(logits_t, k):
    # 1. Compute Probabilities
    probs = F.softmax(logits_t, dim=-1)
    
    # 2. Base Score: Confidence
    scores, _ = probs.max(dim=-1)

    # 3. Select Top-k
    val, idx = torch.topk(scores, k)
    
    
    
    
    return idx
        \end{lstlisting}
    \end{minipage}
    \hfill
    % 右栏：SWD
    % 右边代码长，给宽一点。注意两个宽度之和不要超过 0.95\textwidth，留点余地
    \begin{minipage}[t]{0.55\textwidth}
        \textbf{(b) SWD-Enhanced Decoding (Ours)}
        % 【修改点】同样添加 basicstyle=\scriptsize\ttfamily
        \begin{lstlisting}[style=ebstyle, basicstyle=\scriptsize\ttfamily]
def swd_step(logits_t, logits_prev, k, lam):
    # 1. Compute Probabilities
    probs = F.softmax(logits_t, dim=-1)
    
    # 2. Base Score: Confidence
    scores, _ = probs.max(dim=-1)

    # 3. Apply Stability Modulator (New)
    p_prev = F.softmax(logits_prev, dim=-1)
    kl = (p_prev * (p_prev.log() - probs.log())).sum(-1)
    scores = scores * torch.exp(-lam * kl)
    
    # 4. Select Top-k
    val, idx = torch.topk(scores, k)
    return idx
        \end{lstlisting}
    \end{minipage}
    \vspace{-5pt}
    \caption{\textbf{Implementation Comparison.} Python code for a single decoding step. \textbf{Left:} Standard approach relying solely on confidence. \textbf{Right:} Our SWD method. By adding the stability modulator (Lines 9-11), we suppress unstable tokens with minimal code changes.}
    \label{fig:code_comparison}
\end{figure*}

\subsection{Analysis about CreditDecoding + SWD}

Regarding the marginal gains when combining SWD with CreditDecoding, we attribute this to information redundancy, as CreditDecoding already built a history-aware score method. Specifically, CreditDecoding maintains a Trace Credit $C_{t}^{i,v}$ for each masked position by accumulating historical logits through a process of global decay and focused enhancement: $C_{t}^{i,v} = \beta C_{t+1}^{i,v} + (p_{\theta}^{i}(v|x_{t}))^{\gamma}$ if $v = v^{\ast}$, and $\beta C_{t+1}^{i,v}$ otherwise, where $v^{\ast} = \arg \max_{v \in \mathcal{V}} p_{\theta}^{i}(v|x_{t})$. This accumulated credit is then fused with the model's raw logits, 
$\tilde{f}_\theta(x_t)_v^{i}=f_\theta(x_t)_v^{i}+\alpha\cdot\log(C_{t}^{i,v}+1)$,
which effectively applies a multiplicative prior over the likelihood to stabilize predictions against transient fluctuations. In our framework, SWD is designed to be most effective when paired with scores that directly reflect instantaneous confidence, serving as a dedicated metric for temporal stability to achieve a principled confidence-stability trade-off. However, because CreditDecoding’s base score already integrates this historical consistency to enhance and calibrate current predictions, applying SWD’s KL-based stability penalty results in a over use of the same temporal signal, leading to the observed redundancy in our discussion.

\end{document}